\newtheorem{theorem}{Theorem}
\newtheorem{prop}{Proposition}
\newtheorem{lemma}{Lemma}
\newtheorem{definition}{Definition}
\newtheorem{remark}{Remark} 
\begin{document}

\begin{frontmatter}

\title{Skew-Probabilistic Neural Networks for Learning from Imbalanced Data}

\author[1]{Shraddha M. Naik}
\author[2]{Tanujit Chakraborty\footnote{Joint First Author and Corresponding Author: tanujit.chakraborty@sorbonne.ae}}
\author[2]{Madhurima Panja}
\author[2]{Abdenour Hadid}
\author[3,4,5]{Bibhas Chakraborty}
\affiliation[1]{organization={Khalifa University of Science and Technology},
 city={Abu Dhabi},
country={UAE}}
\affiliation[2]{organization={Sorbonne University Abu Dhabi},
            addressline={SAFIR}, 
            city={Abu Dhabi},
            country={UAE}}
\affiliation[3]{organization={Duke-NUS Medical School, National University of Singapore},
            country={Singapore}}
\affiliation[4]{organization={Department of Statistics and Data Science, National University of Singapore},
country={Singapore}}
\affiliation[5]{organization={Department of Biostatistics and Bioinformatics, Duke University},
country={USA}}

\begin{abstract}
Real-world datasets often exhibit imbalanced data distribution, where certain class levels are severely underrepresented. In such cases, traditional pattern classifiers have shown a bias towards the majority class, impeding accurate predictions for the minority class. This paper introduces an imbalanced data-oriented classifier using probabilistic neural networks (PNN) with a skew-normal kernel function to address this major challenge. PNN is known for providing probabilistic outputs, enabling quantification of prediction confidence, interpretability, and the ability to handle limited data. By leveraging the skew-normal distribution, which offers increased flexibility, particularly for imbalanced and non-symmetric data, our proposed Skew-Probabilistic Neural Networks (SkewPNN) can better represent underlying class densities. Hyperparameter fine-tuning is imperative to optimize the performance of the proposed approach on imbalanced datasets. To this end, we employ a population-based heuristic algorithm, the Bat optimization algorithm, to explore the hyperparameter space effectively. We also prove the statistical consistency of the density estimates, suggesting that the true distribution will be approached smoothly as the sample size increases. Theoretical analysis of the computational complexity of the proposed SkewPNN and BA-SkewPNN is also provided. Numerical simulations have been conducted on different synthetic datasets, comparing various benchmark-imbalanced learners. Real-data analysis on several datasets shows that SkewPNN and BA-SkewPNN substantially outperform most state-of-the-art machine-learning methods for both balanced and imbalanced datasets (binary and multi-class categories) in most experimental settings.
\end{abstract}

\begin{keyword}
Imbalanced classification \sep Probabilistic neural networks \sep Skew-normal distribution \sep Bat algorithm \sep Consistency.
\end{keyword}

\end{frontmatter}

\section{Introduction}\label{sec1}
Data imbalance is ubiquitous and inherent in real-world applications, encompassing rare event prediction with applications in medical diagnosis, image classification, customer churn prediction, and fraud detection. Instead of preserving an ideal uniform distribution over each class level, real data often exhibit a skewed distribution~\cite{he2009learning, krawczyk2016learning} where specific target value has significantly fewer observations. This situation is known as the ``curse of imbalanced data'' in the pattern recognition literature~\cite{fernandez2018learning}. {\color{black}To elaborate more, in applications such as software defect prediction \cite{chakraborty2020hellinger}, credit card fraud detection \cite{chakraborty2020superensemble}, medical diagnosis \cite{yuan2024clinical}, image segmentation \cite{ghosh2024multi}, etc., events corresponding to different classes are observed with unequal probabilities. For example, classifying between defect-prone and non-defect-prone categories in software modules is of great interest to software engineers, playing a vital role in reducing software development costs and maintaining high quality. The defective class is a minority class, although smaller in number, but it carries more cost when misclassified by the software defect prediction models. Similarly, any credit card provider may only observe fewer fraudulent cases among millions of daily transactions. In medical diagnosis, a disease prediction system may find a very limited number of malignant tumor cases compared to many benign cases. All the examples mentioned earlier result in imbalanced data settings where the training set contains an unequal number of instances in the class labels. The imbalanced nature of real-world datasets poses} challenges to conventional classifiers, as they tend to favor the majority class {\color{black}(abundance of training instances)}, resulting in higher misclassification rates for minority class {\color{black}(scarcity of training examples)} samples. Therefore, diverse approaches have been introduced to address class imbalance. Existing solutions for learning from imbalanced data can be categorized into data-level and algorithmic-level approaches~\cite{krawczyk2016learning}.\\

\noindent Data-level approaches primarily focus on balancing the distribution of samples between the majority and minority classes, ultimately creating a harmonized dataset for learning endeavors~\cite{baesens2021robrose,gu2022novel}. Data-level approaches can be broadly categorized into three primary sub-categories: under-sampling, over-sampling, and mixed-sampling. Several under-sampling techniques have been introduced, including modified versions of random sampling~\cite{tahir2012inverse, wang2023kernel}, clustering-based approaches~\cite{yen2009cluster, farshidvard2023novel}, and evolutionary algorithm-based approaches~\cite{zheng2021automatic, gong2022hybrid}. Among these, clustering-based methods (e.g., edited nearest neighbor) share a common approach of grouping samples with similar attributes and selecting a specific number from each cluster based on predefined rules. On the contrary, evolutionary approaches view under-sampling as an optimization challenge within swarm intelligence search algorithms to minimize information loss and mitigate prediction bias. Moving on to oversampling methods, these techniques typically involve duplicating existing minority samples or generating new ones based on specific rules to balance the number of majority and minority instances. A widely recognized approach in this category is the synthetic minority oversampling technique (SMOTE)~\cite{chawla2003smoteboost, fernandez2018smote}. SMOTE achieves this by generating new minority instances close to other existing minority examples through interpolation. However, it is worth noting that SMOTE does not consider the data's density, potentially leading to increased overlap between classes. As a response to this limitation, several extensions of SMOTE have been proposed to address this issue, including majority-weighted minority oversampling~\cite{barua2012mwmote}, borderline-SMOTE~\cite{han2005borderline}, adaptive synthetic sampling (ADASYN)~\cite{he2008adasyn}, convex hull based SMOTE~\cite{yuan2023chsmote}, and SMOTE with boosting (SWB) \cite{sauglam2022novel}. The mixed-sampling method integrates both under-sampling and over-sampling techniques, effectively mitigating the drawbacks of reduced diversity in the minority class and substantial information loss in the majority class~\cite{xu2021improved, koziarski2021csmoute}. {\color{black}Alongside the sampling techniques, optimized decision threshold-based frameworks like the Generalized tHreshOld ShifTing (GHOST) framework have been recently introduced for handling class-imbalance problems \cite{esposito2021ghost}.} It is worth noting that Generative Adversarial Networks (GANs) also possess a high capability to generate synthetic data and to enhance the performance of data-driven machine learning models, especially with deep learning~\cite{chakraborty2023years}. However, data-level approaches are very sensitive to the presence of outliers, and sampling strategies excessively distort the data distribution, often resulting in worse performance in real-world scenarios~\cite{elor2022smote}.\\

\noindent Algorithm-level techniques encompass the adaptation of established classifiers to enhance their predictive capabilities, particularly with regard to the minority class. Within this domain, an array of strategies has emerged, leveraging well-known classifiers, including support vector machines (SVM)~\cite{akbani2004applying}, nearest neighbor methods~\cite{cano2013weighted,he2009learning}, as well as decision trees and random forests~\cite{sardari2017hesitant, chakraborty2020superensemble}. 
Among the algorithmic-level methods for imbalanced classification, the Hellinger distance decision tree (HDDT), which employs Hellinger distance as the splitting criterion, is a prominent technique in the literature~\cite{cieslak2008learning}. HDDT, alongside its ensemble counterpart Hellinger distance random forest (HDRF)~\cite{daniels2017addressing} and other extensions like {\color{black}inter-node HDDT (iHDDT)} \cite{akash2019inter}, adeptly handles both balanced and imbalanced datasets. However, HDDTs, while robust against class imbalance, may face overfitting problems due to a lack of pruning~\cite{boonchuay2017decision}. They can also encounter challenges with sticking to local minima and overfitting, particularly with large trees~\cite{chaabane2020enhancing}. To avoid these challenges, Hellinger net (HNet) \cite{chakraborty2020hellinger} was proposed to tackle the data imbalance challenge for software defect prediction problems. However, the computational expenses resulting from HNet training pose a real challenge when working with tabular datasets. Also, there is a lack of work on developing neural network-based algorithmic solutions dealing with tabular datasets in the imbalanced domain.  \\

\noindent Artificial neural networks (ANNs) have revolutionized machine learning, providing robust solutions for various tasks, including classification and regression~\cite{duda1995pattern, abiodun2019comprehensive}. Among the various types of ANNs, probabilistic neural networks (PNN) have gained significant attention due to their distinctive capability of providing probabilistic outputs~\cite{specht1990probabilistic1}. This unique feature allows PNN to perform accurate predictions and quantify the uncertainty associated with each prediction, making them highly valuable in dealing with uncertain real-world scenarios. At the heart of PNN lies the utilization of a Gaussian kernel density estimation to model the feature vectors of each class in the training data~\cite{mao2000probabilistic}. This non-parametric kernel-based approach enables the PNN to estimate the probability density function for each class, thereby facilitating accurate classification. However, despite its effectiveness in certain scenarios, the Gaussian kernel may not always be optimal for real-world imbalanced datasets due to their complexities and uncertainties, which a single Gaussian model cannot fully capture. In the classical statistics literature, skew-normal distribution is popular for modeling data with non-zero skewness and asymmetric characteristics~\cite{gupta2004characterization, azzalini2005skew}. The current work introduces a probabilistic neural network based on the skew-normal kernel function, which we call SkewPNN. Our approach estimates multiple smoothing parameters through a heuristic Bat algorithm. This method is computationally inexpensive and effectively handles benchmark tabular datasets with varied imbalanced settings. \\

\noindent Furthermore, we theoretically show the consistency of the density estimates in SkewPNN so that it can classify data patterns with unknown classes based on the initial set of patterns, the real class of which is known. By shedding light on the benefits of this flexible and adaptable approach, our work lays the foundation for improved decision-making and problem-solving across a wide range of practical applications (balanced and imbalanced data). {\color{black}In essence, our contributions are as follows:
\begin{enumerate}
    \item We propose two solution strategies, namely SkewPNN and Bat algorithm-based SkewPNN (BA-SkewPNN), utilizing the skew-normal kernel function. Our proposal improves the performance mainly for imbalanced data but also works well for balanced data classification.
    \item BA-SkewPNN utilizes a population-based Bat algorithm to optimize the skew-normal kernel function parameters. This strengthens SkewPNN's ability to handle highly imbalanced data scenarios. A theoretical analysis of the computational complexities of the proposal is also given. 
    \item We establish theoretical results for the consistency of the density estimates for our proposed method, ensuring that the true distribution will be approached smoothly. Our proposed methods generalize probabilistic neural networks; therefore, they can accurately predict target probability scores. 
    \item One of the key advantages of our methods is interpretability since it leverages statistical principles to perform classification tasks. The decision boundaries generated by SkewPNN are smoother and more flexible than those generated by the decision tree and random forest-based solution for imbalanced data problems.
    \item Experimental results on 20 real imbalanced and 15 real balanced datasets (binary and multiclass) show that the proposed methods are more suitable for imbalanced learning problems than other competitive methods, which is further confirmed by the results of the statistical significance tests.
\end{enumerate}}

\noindent The rest of the paper is structured as follows. Section \ref{sec2} provides a comprehensive background on skew-normal distribution, PNN, and the Bat optimization algorithm. In Section~\ref{sec3}, the design of the proposed method is presented, along with the introduction of the fitness function used in the Bat algorithm and its theoretical properties. Experimental results and performance analysis are presented in Section~\ref{sec4}. Finally, we conclude the paper with a discussion of the findings and future scope for further research in Section~\ref{sec5}. Section \ref{appendix} Appendix gives an illustrative example with numerical values.

\section{Preliminaries}\label{sec2}
This section offers a glimpse into the component methods to be used as building blocks for the proposed methods. We discuss their mathematical formulations and relevance in this study. 
\subsection{Skew-Normal Distribution}

In the last few decades, the skew-normal distribution (both univariate and multivariate)~\cite{azzalini1985class, azzalini1996multivariate, azzalini1999statistical, azzalini2005skew} has received great importance for modeling skewed data in statistics, econometrics, nonlinear time series, and finance among many others. The following lemma (for proof, see~\cite{azzalini2005skew}) played a crucial role in its development:
\begin{lemma}
    Let $f_0(\cdot)$ be a $d$-dimensional continuous probability density function (PDF) symmetric around $0$. Let $G$ be a one-dimensional cumulative distribution function such that $G'$ (first derivative of $G$) exists and $G'$ is a density symmetric around $0$. Then, $f(x)$ is a density function on $\mathbb{R}^d$ for any (real-valued) odd function $h(x)$ as defined below:
    $$ f(x) = 2 f_0 (x) G\left\{ h(x)\right\}; \quad x \in \mathbb{R}^d.$$
\end{lemma}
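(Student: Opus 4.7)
The plan is to verify the two defining properties of a probability density: nonnegativity and integration to one. Nonnegativity of $f$ is immediate, since $f_0 \geq 0$ by assumption and $G$, being a one-dimensional CDF, satisfies $0 \le G \le 1$, so $f(x) = 2 f_0(x) G\{h(x)\} \geq 0$ on $\mathbb{R}^d$. All the work therefore lies in showing $\int_{\mathbb{R}^d} f(x)\, dx = 1$.

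My approach for the integration step is probabilistic. First I would let $X$ be a random vector on $\mathbb{R}^d$ with density $f_0$, and let $T$ be a real-valued random variable with density $g := G'$, taken to be independent of $X$. Using the fundamental theorem of calculus we can write $G\{h(x)\} = \mathbb{P}(T \le h(x))$, so by Fubini's theorem
\begin{equation*}
\int_{\mathbb{R}^d} f(x)\, dx \;=\; 2\, \mathbb{E}\bigl[ G\{h(X)\} \bigr] \;=\; 2\, \mathbb{P}\bigl( T - h(X) \le 0 \bigr).
\end{equation*}
Thus it suffices to prove that $\mathbb{P}(T - h(X) \le 0) = \tfrac{1}{2}$.

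The key symmetry argument runs as follows. Since $f_0$ is symmetric about $0$, we have $X \stackrel{d}{=} -X$, and because $h$ is odd, $h(X) \stackrel{d}{=} h(-X) = -h(X)$; hence $h(X)$ has a distribution symmetric about $0$ on $\mathbb{R}$. Similarly $T \stackrel{d}{=} -T$ since $g$ is symmetric about $0$. By independence of $X$ and $T$, the random variable $Z := T - h(X)$ therefore satisfies $Z \stackrel{d}{=} -Z$, i.e.\ $Z$ is symmetric about $0$. Continuity of the densities $f_0$ and $g$ ensures that $Z$ is a continuous random variable, so $\mathbb{P}(Z = 0) = 0$; combining this with the symmetry $\mathbb{P}(Z \le 0) = \mathbb{P}(-Z \le 0) = \mathbb{P}(Z \ge 0)$, we conclude $\mathbb{P}(Z \le 0) = \tfrac{1}{2}$, which yields $\int f = 1$.

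The main obstacle I anticipate is the careful justification of the symmetry of $Z = T - h(X)$ and the use of continuity to rule out atoms at zero. In particular one must handle the possibility that $h(X)$ may fail to have a density in the classical sense (e.g.\ if $h$ is not injective), so the argument should be carried out at the distributional level via the identity in law $h(X) \stackrel{d}{=} -h(X)$, followed by a convolution/independence argument for $Z \stackrel{d}{=} -Z$, rather than by manipulating densities directly.
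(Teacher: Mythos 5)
Your proof is correct, and it is essentially the argument the paper relies on: the paper does not prove the lemma itself but cites Azzalini's work, where the proof is exactly this symmetry argument — nonnegativity is immediate, and $\int f = 2\,\mathbb{P}(T \le h(X)) = 1$ follows because $T - h(X)$ is symmetric about $0$ with no atom at $0$ (guaranteed since $T$ has the density $G'$ and is independent of $h(X)$). A minor streamlining you could note: since $G'$ is a symmetric density, $G(-w) = 1 - G(w)$ for all $w$, so $2\,\mathbb{E}[G\{h(X)\}] = \mathbb{E}[G\{h(X)\}] + \mathbb{E}[G\{-h(X)\}] = 1$ directly, avoiding any discussion of atoms of $T - h(X)$.
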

\noindent The above lemma suggests that a symmetric `basis' density $f_0$ can be manipulated via a perturbation function $G\{h(x)\}$ to get a skewed density $f(x)$. Also, the perturbed density will always include a `basis' density. On using $f_0 = \phi(\cdot)$ (density function of the standard normal distribution) and $G = \Phi(\cdot)$ (distribution function of the standard normal distribution) and $h(x) = \alpha x$, where $\alpha \in \mathbb{R}$, we get the skew-normal distribution with shape parameter $\alpha$, denoted by $\operatorname{SN}(\alpha)$, and defined as:
\begin{equation}\label{Eq_1}
    f(x, \alpha) = 2 \phi(x) \Phi(\alpha x); \quad -\infty < x, \; \alpha < \infty.    
\end{equation}
For $\alpha = 0$, we obtain the standard normal density. A more general form (also called a three-parameter skew-normal distribution) has received considerable attention in the last two decades because of its greater flexibility and applications in various applied fields and is as follows:
\begin{equation*}
    f(x; \xi, \sigma, \alpha) = \frac{2}{\sigma} \phi\left(\frac{{x - \xi}}{\sigma} \right) \Phi\left(\frac{\alpha (x - \xi) }{\sigma} \right); \quad -\infty < x, \; \xi, \; \alpha < \infty, \; \sigma > 0,   
\end{equation*}
where $\xi$ is the location parameter, $\sigma$ is the scale parameter and $\alpha$ is a skewness (also known as tilt) parameter. {\color{black}Note that, throughout this manuscript, we represented the scale parameter of both normal and skew-normal distribution by $\sigma$ to avoid confusion; however, in the experimental setting, the values of $\sigma$ in normal and skew-normal may be different.} Several authors have explored this model to analyze skewed and heavy tail data due to its flexibility~\cite{gupta2004multivariate, kundu2014geometric, arnold2004characterizations, genton2005discussion}. A multivariate extension is straightforward~\cite{azzalini1996multivariate} and can be defined as follows: A random vector $\mathbf{X} = \left(X_1, X_2, \ldots, X_d\right)^T$ is said to follow a $d$-dimensional multivariate skew-normal distribution if it has the following density function:
$$
    f_{d}\left(\mathbf{x}\right) = 2 \phi_d \left(\mathbf{x}, \mathbf{\Sigma}\right) \Phi\left(\mathbf{\alpha}^T \mathbf{x} \right); \quad \mathbf{x} \in \mathbb{R}^d,
$$
where $\phi_d \left(\mathbf{x}, \mathbf{\Sigma}\right)$ denotes the PDF of $d$-dimensional multivariate normal with correlation matrix $\mathbf{\Sigma}$ with $\mathbf{\alpha}$ is the shape parameter. Inferential statistical properties have been studied, and their successful applications are found to analyze several multivariate datasets in various fields (for e.g., reliability and survival analysis~\cite{kundu2014geometric}) due to their flexibility.

\subsection{Probablistic Neural Networks (PNN)}
Probabilistic neural networks (PNN) are a specialized type of single-pass neural network known for their unique architecture. Unlike multi-pass networks, PNN does not rely on iterative weight adjustments, simplifying their operation and making them well-suited for real-time applications~\cite{specht1990probabilistic}. PNN are particularly notable for their use of Bayesian rules, which enable them to estimate posterior probabilities~\cite{specht1990probabilistic, richard1991neural}.
As a powerful tool for pattern classification tasks, PNN has garnered significant attention for their capacity to handle uncertainty and offer valuable insights in various real-world applications. This architecture is characterized by four layers, such as an input layer for processing test patterns, a pattern layer with neurons corresponding to training patterns, a summation layer representing class neurons, and an output layer providing the final classification~\cite{specht1990probabilistic}. PNN has demonstrated their value across various fields, including medicine, science, business, and industries, owing to their ability to handle real-time data effectively~\cite{zhang2000neural}. 

{\color{black}During training, PNN utilizes the Parzen window, a non-parametric density estimation technique, to approximate the underlying data distribution. The working principles of PNN are as follows: (1) Given a new input (test point), the pattern layer computes the similarity between the input and available training samples; (2) The summation layer estimates the probability of the input belonging to each class; and (3) The output layer selects the class with the highest probability.} Therefore, for the prediction phase, a winner-takes-all strategy is employed, where the class with the highest probability is selected as the output. This capacity to model the entire PDF empowers PNN to effectively handle both binary and multi-class classification tasks with remarkable accuracy~\cite{mao2000probabilistic}. {\color{black}The training process in PNN is fast as they only need one pass through the training data.} In PNN, the Gaussian distribution is favored for its well-understood properties and straightforward mathematical form, characterized by mean ($\mu$) and standard deviation ($\sigma$) which defines the shape of the distribution. Within PNN, it is used to estimate class probabilities, associating each training pattern with a Gaussian {\color{black}kernel} function centered on its feature values. The spread parameter, determining function width, significantly influences smoothness and generalization. PNN's effective modeling of data distribution using Gaussian kernel makes them suitable for tasks like classification and regression, offering advantages in training, interpretability, and multi-class handling.

Conventional PNN employs the following mathematical formulation to calculate the PDF of a set of random variables $X_1, X_2, \ldots, X_n$ with unknown PDF $g(x)$ is estimated using a family of estimators for each class~\cite{specht1990probabilistic} as 
\begin{equation}\label{Eq2}
   f_{n}(x) = \frac{1}{{nh(n)}} \sum_{i=1}^{n} K\left(\frac{{x - x_i}}{{h(n)}}\right),
\end{equation}
where $K(\cdot)$ is a kernel function (discussed below) and the bandwidth $h(n)$ constitutes a sequence of numbers satisfying the condition 
\begin{equation}\label{Eq_9}
  \lim_{{n \to \infty}} h(n) = 0.  
\end{equation}
This process entails the utilization of a non-parametric density estimation technique, such as the Parzen window, to approximate the underlying data distribution for each class ($c$) individually and given as
\begin{equation} 
   P_c(x) = \frac{{f_n(x)}}{{\sum {f_n(x)}}}.
\end{equation}
In vector formulation, for a given new input sample $x$ and a training sample $x_i$ from class $c$, the Gaussian kernel can be represented as
\begin{equation}\label{Eq_Gaussian_Kernel}
{\color{black}K(x,x_i) = \exp\left(-\frac{\left\|x - x_i\right\|^2}{2\sigma^2}\right),}
\end{equation}
where $\sigma$ represents the smoothing parameter ({\color{black}controls the width of the Gaussian kernel}), $x$ is the vector representing the input sample, and $x_i$ is the vector representing the training sample from class $c$. The Gaussian kernel computes the similarity between the new input sample and each training sample, with higher values indicating stronger similarity. It represents a symmetric, bell-shaped curve centered at $0$, with the standard deviation determining the width of the curve.
Several notable extensions of PNN were introduced in the recent literature, namely weighted PNN~\cite{montana1992weighted}, self-adaptive PNN~\cite{yi2016improved}, and Bat-algorithm-based PNN~\cite{naik2020bat}, among many others which addresses the complexities of parameter estimation in PNN. However, none of the above provides a robust solution to the data imbalance problem.

\subsection{Bat Algorithm (BA)}\label{Sec_BA_Algo}
Bat algorithm (BA) is a meta-heuristic optimization technique inspired by the echolocation behavior observed in Bats during their foraging activities~\cite{yang2010new}. It commences by randomly placing a population of $B$ Bats within the search space, with each Bat symbolizing a potential solution to the optimization problem at hand~\cite{yang2019ba,naik2020bat}.

At every iteration $t$, the velocity $(v_i(t))$ of each Bat $i$ is updated based on the disparity between its current position $(\varkappa_i(t))$ and the best solution discovered $(\varkappa_{\text{best}})$. The velocity adjustment is regulated by a fixed frequency $f_{min}$ with a maximum limit of $f_{max}$ and dynamic frequency $f_i$. The velocity $v_i(t+1)$ at time step $t+1$ is determined as
\begin{equation}
v_i(t+1) = v_i(t) + (\varkappa_{\text{best}} - \varkappa_i(t)) f_i,
\end{equation}
where $f_i$ is computed as $f_i = f_{min} + (f_{max}-f_{min}) \beta$, with $\beta \in [0,1]$. Following the velocity update, the new position $(\varkappa_i(t+1))$ of each Bat $i$ is computed by adding the updated velocity to its current position
\begin{equation}
\varkappa_i(t+1) = \varkappa_i(t) + v_i(t+1).
\end{equation}
Certain Bats may engage in local search by exploring neighboring regions. This is achieved by introducing a random displacement controlled by a parameter $\epsilon$, defined as
\begin{equation}
\varkappa_i(t+1) = \varkappa_i(t) + \epsilon A_t,
\end{equation}
where $A_t$ represents the average loudness, and $\epsilon \in [-1,1]$. Initially, the positive loudness $A_0$ is set. To adapt to evolving conditions during the optimization process, the loudness $(A_i(t+1))$ of each Bat $i$ diminishes over time to decrease the appeal of a solution. This is updated using a scaling factor $\lambda$ as $A_i(t+1) = \lambda A_i(t)$. Similarly, the pulse rate $(r_i(t+1))$ governs the likelihood of emitting ultrasound pulses and is adjusted throughout the optimization process. BA iterates until a specified stopping condition is fulfilled, which could be reaching a maximum iteration count or attaining a predefined level of precision. This iterative process enables the algorithm to navigate the solution space effectively and ultimately converge toward optimal or highly satisfactory solutions. This versatility and effectiveness render the BA as a valuable and potent tool across a range of problem domains, and this nature-inspired algorithm combined with PNN having a skew-normal kernel is used in this study to address the class imbalance problem. 

\section{Proposed Method}\label{sec3}
This section introduces the proposed SkewPNN, followed by Bat algorithm-based SkewPNN and BA-SkewPNN. Furthermore, we study the consistency of density estimates in SkewPNN. 

\subsection{Algorithm 1: SkewPNN}\label{algo_skewpnn}
{\color{black}SkewPNN consists of four layers structured as follows:
\begin{itemize}
    \item Input Layer: Accept feature vectors of the data.
    \item Pattern layer: Computes distances between the input vector and each training sample using a similarity measure (skew-normal kernel as defined in Eqn. \ref{eq_skew_normal_kernel}).
    \item Summation Layer: Aggregates the outputs of the pattern layer for each class.
    \item Output Layer: Assign the input to the class with the highest posterior probability.
\end{itemize}}

\noindent To elaborate, the skew-normal distribution is used as a kernel function in PNN for situations where the data exhibits skewed distributions with a long tail. The skew-normal density, defined by its location parameter $\xi$, scale parameter $\sigma$, and skewness parameter $\alpha$, is as follows:
\begin{equation}\label{Eq_10}
    f(x; \xi, \sigma, \alpha)  = \frac{2}{{\sigma\sqrt{2\pi}}} \exp\left(-\frac{{(x-\xi)^2}}{{2\sigma^2}}\right) \Phi \left\{\alpha \left(\frac{x-\xi}{\sigma}\right)\right\}; \quad \xi, x \in (- \infty, \infty), \; \sigma > 0,
\end{equation}    
{\color{black} where $\xi$ determines the central point of the distribution, $\sigma$ controls the spread of the distribution, $\alpha$ introduces asymmetry (shape parameter), and $\Phi\left(z\right)$ denotes the standard normal cumulative distribution function. For $\alpha = 0, \; f(x)$ is simply the normal distribution where $\alpha > 0$ denotes a longer tail to the right and $\alpha < 0$ denotes the longer tail to the left.} The use of skew-normal kernel function will allow the PNN to effectively handle complex and non-symmetric data distributions, making it a valuable choice for various real-world applications. By considering the skewness of the data, the skew-normal kernel enhances the PNN's ability to provide accurate and probabilistic outputs for classification tasks, particularly when dealing with uncertain and imbalanced datasets. The skew-normal kernel function (without a location parameter) is defined as:
\begin{equation} 
    {\color{black}K\left(x, x_i\right) = \exp\left(-\frac{\left\|x - x_i\right\|^2}{2\sigma^2}\right) \Phi\left(\frac{\alpha  \left\|x - x_i\right\|}{\sigma}\right),}
    \label{eq_skew_normal_kernel}
\end{equation}
where $x$ is the vector representing the new input sample, $x_i$ is the vector representing the training sample from class $c$, $\sigma$ represents the smoothing parameter, and $\Phi$ denotes the cumulative distribution function (CDF) of the standard normal distribution. The skew-normal kernel function now properly accounts for the smoothing parameter $\sigma$ in the cumulative distribution term, making it a suitable alternative to the Gaussian kernel {\color{black}(as in Eqn. \ref{Eq_Gaussian_Kernel})} for handling non-symmetric data distributions in PNN. Incorporating skew-normal kernels introduces asymmetry to the distribution, enabling a more flexible data representation. This plays a pivotal role in PDF computations during both the training and prediction stages of the SkewPNN model, with the selection of the skewness parameter being contingent on the dataset's characteristics and the specific problem being addressed. {\color{black}In the summation layer of SkewPNN, for a given input $x$, the class probabilities (for class $c$) are computed as:
\begin{equation*}
    P_c\left(x\right) = \frac{1}{N_c} \sum_{x_i \in \text{ class } c} K\left(x, x_i\right),
\end{equation*}
where $N_c$ is the number of training samples in class $c$. The decision in the output layer is the predicted class $\hat{c}$, which is the one with the highest posterior probability:
\begin{equation*}
    \hat{c} = \underset{c}{\operatorname{arg max}} P_c\left(x\right).
\end{equation*}
The key difference between standard PNN and SkewPNN lies in the pattern layer activation function, which uses the skew-normal kernel instead of the Gaussian kernel. The hyperparameters in SkewPNN are $\sigma$ (smoothing parameter) and $\alpha$ (skewness parameter), and adjusting $\alpha$ helps in handling data imbalance problems.} Since the Gaussian kernel is a particular case of Eqn. (\ref{Eq_10}) with constant terms, SkewPNN can also be useful for classifying balanced datasets. {\color{black}An illustrative example with numerical values is given in Appendix (Section \ref{appendix}) to explain how the mechanism works in handling imbalanced data.}

\begin{remark}
    {\color{black}The proposed SkewPNN model addresses class imbalance by using the skew-normal kernel, which introduces a skewness parameter $(\alpha)$ to model asymmetrical data distributions adaptively. This parameter enables the kernel to amplify the contribution of minority class samples, ensuring balanced probability estimations across classes. Theoretical properties of the skew-normal distribution ensure that minority class densities are neither underestimated nor dominated by majority class contributions. By naturally reshaping the density estimation process, the SkewPNN framework balances soft probabilistic predictions, improving classification performance in imbalanced settings. In traditional PNN, minority samples contribute minimally to overall probability scores, whereas SkewPNN’s kernels amplify the influence of minority samples due to their alignment with the underlying skewed distribution.}
\end{remark}


\subsection{Algorithm 2: BA-SkewPNN}
For optimizing the parameters of the skew-normal kernel (smoothing parameter $\sigma$ and skewness parameter $\alpha$) in SkewPNN, the population-based Bat algorithm is employed. Using the Bat optimization approach, the SkewPNN can fine-tune its parameters effectively and achieve improved performance, especially when dealing with imbalanced datasets and data with asymmetric target distributions. The Bat algorithm's ability to explore the hyper-parameter space and find suitable solutions enhances the SkewPNN's capability to handle uncertainty and class imbalance, leading to better classification accuracy and robustness in practical applications. The fitness function in the context of the BA is a crucial component that evaluates the quality of candidate solutions (Bats) during the optimization process. The primary objective of the fitness function is to quantify how well each Bat performs in relation to the optimization problem at hand. It provides a numerical measure of the fitness or suitability of a solution, enabling the algorithm to distinguish between better and worse solutions. In the case of optimizing the parameters of SkewPNN, the fitness function is designed to assess the performance of the SkewPNN on the given classification task. It typically uses performance metrics such as accuracy, F1-score, or area under the receiver operating characteristic curve (AUC-ROC) to measure how well the SkewPNN classifies the data. During each iteration of the Bat Algorithm, the fitness function is applied to evaluate the performance of each Bat's solution. The algorithm then uses this fitness value to update the Bats' positions, velocities, loudness, and pulse rates, guiding the search towards better solutions. The goal of the BA is to find the optimal or near-optimal set of hyper-parameters that maximizes the performance of the SkewPNN on the given classification task. By iteratively evaluating and updating the fitness of the candidate solutions, the BA efficiently searches the solution space, eventually converging to a set of hyper-parameters that yields improved classification accuracy and robustness, especially on imbalanced datasets. The fitness function's design and choice of performance metrics are critical considerations, as they directly influence the success of the optimization process and the final performance of the SkewPNN on the classification task. {\color{black}Given the fitness function, the steps of integrating BA with SkewPNN are as follows:
\begin{enumerate}
    \item Initialize Population: Randomly initialize $B$ Bats with values for $\sigma$ and $\alpha$ with predefined ranges 
    $$ \sigma \in \left[\sigma_{\operatorname{min}}, \sigma_{\operatorname{max}}\right], \; \alpha \in \left[\alpha_{\operatorname{min}}, \alpha_{\operatorname{max}}\right].$$
    \item Update Position and Velocities: Update the positive $\varkappa_{i}\left(t\right)$ and velocity $v_{i}\left(t\right)$ of each Bat at iteration:
    \begin{align*}
        f_i &= f_{\operatorname{min}} + \left(f_{\operatorname{max}} - f_{\operatorname{min}}\right)\beta \\
        v_{i}\left(t+1\right) & = v_{i}\left(t\right) + \left(\varkappa_{\operatorname{best}} - \varkappa_{i}\left(t\right)\right) f_i, \\
        \varkappa_{i}\left(t + 1\right) & = \varkappa_{i}\left(t\right) + v_{i}\left(t+1\right),
    \end{align*}
    where $f_i$ and $\beta$ are defined in Section \ref{Sec_BA_Algo}.
    \item Perform Local Search: If a Bat is selected, perform a random walk:
    $$\varkappa_{\operatorname{new}} = \varkappa_{\operatorname{current}} + \epsilon A_t,$$
    where $\epsilon \in [-1, 1]$ and $A_t$ is loudness at iteration $t$.
    \item Evaluate Fitness and Update $A_t$: Use SkewPNN to classify a validation dataset using the current Bat's hyperparameters to compute the fitness score:
    $$A_i\left(t+1\right) = \lambda A_i\left(t\right),$$
    where $\lambda$ is the scaling factor. Using pulse rate, we let the Bats converge. Based on the highest fitness score, we identify the appropriate Bat, update $\varkappa_{\operatorname{best}}$, and use it in the next iteration.
    \item Stopping Criterion: We stop the algorithm when a maximum number of iterations or a satisfactory fitness score is reached.
\end{enumerate}
Thus, BA-SkewPNN is well-suited for hyperparameter tuning and handles nonlinear hyperparameter space. This results in the `best' values of $\sigma$ and $\alpha$ that optimize the fitness function to improve the performance of the SkewPNN algorithm.} The structure of the proposed models is illustrated in Figs. \ref{fig:Skewpnn_archi} and \ref{fig:BAskewpnn_archi}.
{\color{black}\begin{remark}
\noindent\textbf{Advantages of our proposed methods over traditional PNN and BA-PNN:}
SkewPNN and BA-SkewPNN algorithms modify the kernel used in the pattern layer of PNN. Instead of using a symmetric Gaussian kernel, a skew-normal kernel is applied to (a) model asymmetric distributions and (b) provide greater flexibility in capturing patterns in complex datasets. By tweaking $\alpha$ and $\sigma$, SkewPNN can enhance the influence of the underrepresented classes in the imbalanced datasets. Like PNN, SkewPNN retains its probabilistic framework, making it interpretable and robust for decision-making. In BA-SkewPNN, since we use the bio-inspired optimization algorithm for parameter estimation, this may result in improved accuracy as compared to SkewPNN for real-data applications. 
\end{remark}}

{\color{black}\begin{remark}\label{rem3}
\noindent\textbf{Limitations of the proposed methods:}
Like PNN, SkewPNN presented in this study struggles with large datasets due to its reliance on storing all training samples. SkewPNN requires more memory than some of the statistical classifiers during inference because all training samples are stored in it. SkewPNN may also struggle with high-dimensional datasets. On the other hand, for very small datasets, the added flexibility of the skewness parameter in BA-SkewPNN can sometimes lead to an overfitting problem because of the usage of BA. This usually happens when we run BA for too many iterations, increasing the likelihood of overfitting by narrowing the search space too aggressively. However, we prevent this in BA-SkewPNN by using k-fold cross-validation and early stopping criteria to prevent excessive tuning. 
\end{remark}}

\subsection{Implementations of Algorithm 1 and 2}
A rigorous evaluation was conducted employing the 10-fold cross-validation technique to gauge the models' efficacy. This approach partitions the dataset into ten subsets, utilizing nine subsets for training and one for validation in each iteration. Such an approach offers a robust assessment by reducing the risk of overfitting and enhancing generalization because the model is trained and validated across diverse data segments, maximizing the utilization of available information. The training and testing datasets undergo normalization through z-score normalization. Subsequently, SkewPNN is executed, with the notable modification of substituting the Gaussian kernel with the skew-normal kernel within the proposed framework. The estimation of hyper-parameters is facilitated by the BA, which integrates a unique fitness function encompassing the maximization of the summation of accuracy, AUC-ROC, and F1-score. We delved into diverse configurations of the model's pattern layer. Within this stratum, the PDF of each class is computed to ascertain the probability of an input instance's association with each class. For every PDF, hyper-parameters are derived via two distinct methodologies, applied to both the Gaussian and the skew-normal kernels. In the former approach, the hyper-parameters maintain uniformity across all patterns in the pattern layer. In the latter approach, however, the hyper-parameters vary for each pattern. In this context, using a population-based heuristic algorithm significantly aids in calculating distinct values for different patterns. This is because such algorithms, like the Bat optimization technique used in our study, can systematically explore the parameter space, adapting to the unique characteristics of each pattern. By allowing hyper-parameters to vary individually, the algorithm optimizes their values in alignment with the specific requirements and complexities of each pattern, ultimately enhancing the model's adaptability and performance.


\begin{figure}
    \centering
    \includegraphics[width=0.8\linewidth]{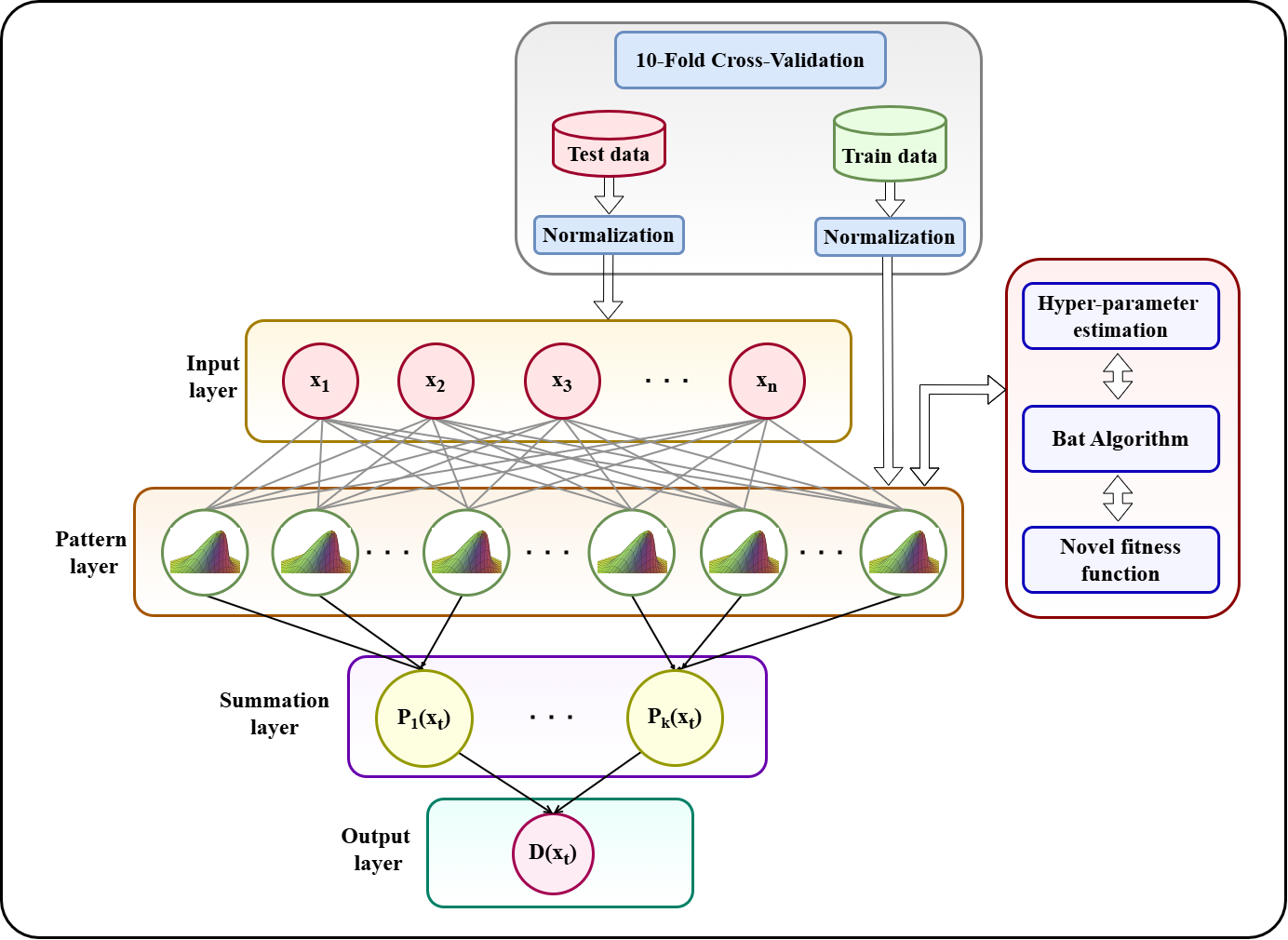}
    \caption{Schematic representation of the proposed SkewPNN and BA-SkewPNN architectures.}
    \label{fig:Skewpnn_archi}
\end{figure}

\begin{figure}
    \centering
    \includegraphics[width=0.8\linewidth]{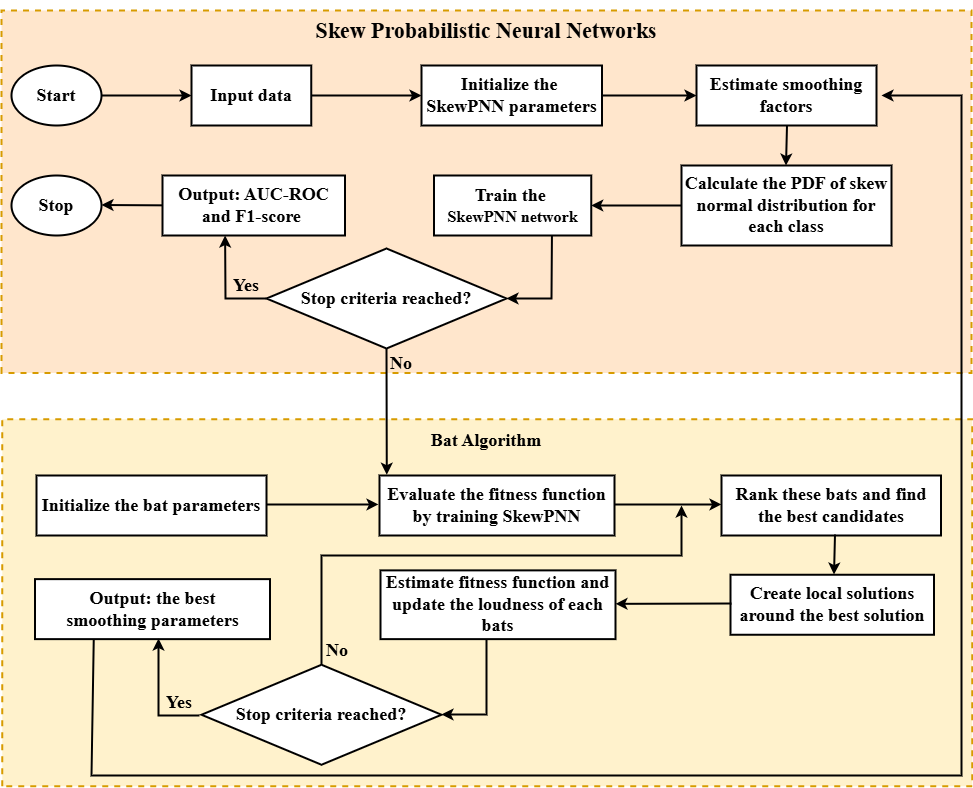}
    \caption{Workflow diagram of the proposed SkewPNN and BA-SkewPNN frameworks.}
    \label{fig:BAskewpnn_archi}
\end{figure}



{\color{black}\subsection{Computational Complexity}
Computational complexity is a crucial indicator for measuring the
efficiency of algorithms. Following the complexity analysis of several imbalanced classifiers done in the past literature \cite{li2024complemented, li2024imbalanced, li2024density}, we calculate the computational complexity of the proposed models. SkewPNN consists of four layers, just like PNN. The attributes of an input vector $\mathbf{x}$ of dimension $d$ comprise the first input layer of the model. The second layer is the pattern layer, composed of as many neurons as training data. This layer applies the kernel to the input defined in Eqn. (\ref{eq_skew_normal_kernel}). The output of the pattern neurons is fed forward to the summation layer that actually gets the average of the output of the pattern units for each class. Suppose there are $J$ neurons in the summation layer, with each $j^{th}$ node acquiring signals from the neurons of the $j^{th}$ class. Finally, the decision layer declares the class assigned to the input vector based on the unit with maximum output from the summation layer. Now, we analyze the computational complexity of the proposed method and compare it with that of PNN and ANN models. Before starting that, we define the Big-$\mathcal{O}$ notation used to describe an algorithm's performance complexity (describing the worst-case scenarios in terms of time or space complexity). Although Big-$\mathcal{O}$ is used to compare the efficiency of various algorithms, they only describe these asymptotic behaviors (not the exact value). 
\begin{definition}
    Given two functions $f\left(n\right)$ and $q\left(n\right)$, we say that $f\left(n\right)$ is $\mathcal{O}\left(q\left(n\right)\right)$ if there exist constants $e > 0$ and $n_0 \geq 0$ such that $f\left(n\right) \leq e q\left(n\right)$ for all $n \geq n_0$. 
\end{definition}
\noindent The computational complexity for the training phase in the pattern layer is $\mathcal{O}\left(N \left(d + K\right)\right)$, where $N$ is the number of training samples, $d$ is the dimension of the input feature space and $K$ accounts for the computational cost of $\Phi\left(\cdot\right)$. In the summation layer, the computational complexity is $\mathcal{O}\left(C N\right)$, where $C$ is the total number of classes. Therefore, the overall computational complexity for SkewPNN is $\mathcal{O}\left(N \left(d + K\right)\right)$ with the dominant term being the kernel computation, whereas, for PNN with Gaussian kernel, the complexity is $\mathcal{O}\left(N d\right)$ for the training phase. For PNN, the total complexity for the distance calculation step is $\mathcal{O}\left(N d\right)$, the complexity of the Gaussian kernel function is constant, and the summation layer remains $\mathcal{O}\left(C N\right)$ operation for $C$ classes. Therefore, the overall complexity for classifying a single input vector is $\mathcal{O}\left(N d\right)$ as $Nd$ dominates for large $d$ or $N$. Adding the skewness parameter $\alpha$ increases the complexity of model selection. However, the inference complexity for the traditional neural network with two hidden layers is $\mathcal{O}\left(N\left(d + N\right)\right)$, much higher than SkewPNN. This is because the training is instantaneous in PNN and SkewPNN compared to the traditional neural network, which uses iterative gradient descent. For the prediction phase having $M$ test samples, the complexity for PNN is $\mathcal{O}\left(MNd\right)$, whereas for SkewPNN, it is $\mathcal{O}\left(MN(d+K)\right)$. BA-SkewPNN has higher training complexity as compared to SkewPNN due to hyperparameter tuning, which is $\mathcal{O}\left(TBPN(d+K)\right)$, where $T$ is the number of iterations for $B$ Bats for $P$ number of cross-validation. However, the computational complexity of SkewPNN and BA-SkewPNN is much lower than that of modern deep learning methods.}

\subsection{Consistency}
\noindent The Parzen window estimator~\cite{parzen1962estimation} for the PDF is expressed in Eqn. (\ref{Eq2}). $K$ represents the kernel function, which, in our case, is the skew-normal kernel function. Here, $h(n)$ approaches zero as the data points increase. It is chosen in a way that decreases the width of the window, allowing for more localized estimation as the dataset size grows. The specific choice of $h(n)$ can depend on the problem and the characteristics of the data. 
The standard way to define consistency is that the expected error gets smaller as the estimates are based on a larger dataset. This is of particular interest since it will ensure that the true distribution will be approached in a smooth manner~\cite{parzen1962estimation}. Conditions under which this happens for the density estimates are given by the following theorem:

\begin{theorem}
    Suppose $K(x; \xi, \sigma^2, \alpha)$ (as in Eqn. \ref{Eq_10}) is a Borel function satisfying the conditions:
    \begin{enumerate}
        \item[(A1)] $\underset{- \infty < x < \infty}{\operatorname{sup}} \left|K(x; \xi, \sigma^2, \alpha) \right| < \infty$,

        \item[(A2)] $\int_{- \infty}^{\infty} \left|K(x; \xi, \sigma^2, \alpha) \right| dx < \infty$,

        \item[(A3)] $\underset{x \rightarrow \infty}{\lim} \left|x K(x; \xi, \sigma^2, \alpha) \right| = 0$,

        \item[(A4)] $\int_{- \infty}^{\infty} K(x; \xi, \sigma^2, \alpha) dx = 1$,
    \end{enumerate}
along with the conditions in Eqn. (\ref{Eq2}) and Eqn. (\ref{Eq_9}), then the estimate $f_n(x)$ is consistent in quadratic mean in the sense that
$$
    \mathbb{E}\left|f_nx) - g(x) \right| \rightarrow 0 \text{ as } n \rightarrow \infty. 
$$
\end{theorem}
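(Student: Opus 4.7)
The plan is to adopt the classical Parzen--Cacoullos strategy: split the mean absolute error into a deterministic bias and a stochastic fluctuation via the triangle inequality,
$$\mathbb{E}|f_n(x) - g(x)| \le |\mathbb{E}f_n(x) - g(x)| + \mathbb{E}|f_n(x) - \mathbb{E}f_n(x)|,$$
and then show that each term vanishes as $n \to \infty$ under (A1)--(A4) and $h(n) \to 0$. The second term will be controlled by Cauchy--Schwarz through the variance, so the problem reduces to (i) a bias lemma and (ii) a variance bound. I would also flag explicitly that the variance half of the argument requires $n h(n) \to \infty$ in addition to $h(n) \to 0$; this is the standard Parzen bandwidth assumption and appears to be implicitly needed alongside Eqn.\ (\ref{Eq_9}).

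For the bias, since the $X_i$ are i.i.d.\ with density $g$, the substitution $u = (x-y)/h(n)$ gives
$$\mathbb{E}f_n(x) = \int K(u)\, g(x - u\, h(n))\, du,$$
and by (A4) this yields
$$\mathbb{E}f_n(x) - g(x) = \int K(u)\bigl[g(x - u h(n)) - g(x)\bigr]\, du.$$
The key step is a Bochner-type lemma: if $K$ is bounded (A1), absolutely integrable (A2), and satisfies $|u K(u)| \to 0$ as $|u|\to\infty$ (A3), then at every continuity point $x$ of $g$ one has $h(n)^{-1} K(\cdot/h(n)) \ast g \to g(x)$ as $h(n) \to 0$. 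I would prove this by splitting the integral over $\{|u| \le \delta\}$ and its complement: the inner piece is handled by local continuity of $g$ combined with (A2) via dominated convergence, while the tail is handled by writing $|K(u)\, g(x-u h(n))| \le |u K(u)| \cdot |g(x - u h(n))|/|u|$ and invoking (A3) to force the remainder below any prescribed $\varepsilon$.

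For the variance, independence of the $X_i$ gives
$$\mathrm{Var}(f_n(x)) = \frac{1}{n h(n)^2}\,\mathrm{Var}\!\left(K\!\left(\tfrac{x - X_1}{h(n)}\right)\right) \le \frac{1}{n h(n)^2}\,\mathbb{E}\!\left[K\!\left(\tfrac{x - X_1}{h(n)}\right)^{\!2}\right],$$
and the same change of variables combined with the uniform bound (A1) yields
$$\mathrm{Var}(f_n(x)) \le \frac{\sup_u |K(u)|}{n\, h(n)} \int |K(u)|\, g(x - u\, h(n))\, du \le \frac{C(x)}{n\, h(n)},$$
which tends to zero once $n h(n) \to \infty$. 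By Cauchy--Schwarz, $\mathbb{E}|f_n(x) - \mathbb{E}f_n(x)| \le \sqrt{\mathrm{Var}(f_n(x))} \to 0$, and combining with the bias limit yields the claimed convergence $\mathbb{E}|f_n(x) - g(x)| \to 0$.

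The principal obstacle is the Bochner-type lemma, not the variance bound: making $(h(n)^{-1} K(\cdot/h(n)) \ast g)(x) \to g(x)$ rigorous requires delicate control of both the local and tail contributions, and this is precisely where (A3) does the essential work that (A1), (A2), and (A4) alone cannot supply. Once that lemma is in hand, verifying that the skew-normal kernel of Eqn.\ (\ref{Eq_10}) satisfies (A1)--(A4) is routine: boundedness and normalization follow from standard properties of $\phi$ and $\Phi$, while the Gaussian factor $\phi((x-\xi)/\sigma)$ forces $|x K(x; \xi, \sigma^2, \alpha)| \to 0$ by super-polynomial decay, and hence also guarantees (A2).
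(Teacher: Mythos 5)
Your proposal is correct, but it takes essentially the opposite route from the paper. The paper's proof spends nearly all of its effort verifying that the skew-normal kernel of Eqn.~(\ref{Eq_10}) actually satisfies (A1)--(A4): boundedness (A1) is obtained through a log-concavity proposition showing the density is unimodal with a finite, explicitly approximated mode; (A2) and (A4) are immediate because $K$ is a density; and (A3) is argued via existence of the skew-normal mean together with the tail bound $\int_u^{\infty} xK\,dx \geq u\left[1 - K_v(u;\xi,\sigma^2,\alpha)\right]$. The implication ``(A1)--(A4) plus the bandwidth conditions imply quadratic-mean consistency'' is then outsourced entirely to Parzen (1962). You do the reverse: you reprove the Parzen implication from scratch --- the bias/fluctuation split, the Bochner-type approximation lemma for $h(n)^{-1}K(\cdot/h(n)) \ast g \to g$ at continuity points, and the Cauchy--Schwarz variance bound --- while waving the kernel verification through as ``routine.'' Each choice buys something. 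Your route is self-contained on the probabilistic side and, importantly, makes explicit two hypotheses the paper leaves tacit: consistency holds at continuity points of $g$, and the variance term needs $n\,h(n) \to \infty$, which is genuinely required and does not follow from Eqn.~(\ref{Eq_9}) alone --- flagging this is a real contribution, since the theorem as stated only assumes $h(n) \to 0$. (Your decomposition also directly delivers the displayed conclusion $\mathbb{E}\left|f_n(x) - g(x)\right| \to 0$, which is a mean absolute error despite the ``quadratic mean'' wording.) Conversely, the material you defer as routine is precisely where the paper's kernel-specific work lives: the finiteness of $\sup_x K$ is not just ``standard properties of $\phi$ and $\Phi$'' in the paper's treatment but a unimodality argument via log-concavity, and (A3) requires the moment/tail argument above. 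A complete write-up in the spirit of your proposal should still include those verifications (or at least the short versions: $2\phi\Phi \leq 2\phi \leq \sqrt{2/\pi}/\sigma$ for (A1), and Gaussian decay dominating the linear factor for (A3)), since the theorem is stated for the specific kernel of Eqn.~(\ref{Eq_10}).
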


\begin{proof}
To show that $K(x; \xi, \sigma^2, \alpha)$ satisfying (A1) with skew-normal kernel density, we need to find the mode of the skew-normal distribution. For that, we establish the following result on log-concavity; that is, the logarithm of its density is a concave function \cite{azzalini2013skew}.

\begin{prop}
    The distribution $K(x; \xi, \sigma^2, \alpha)$ in SkewPNN is log-concave.
\end{prop}
\begin{proof}
    It suffices to prove this for the case when $\xi = 0$ and $\sigma^2 = 1$ since a change of location and scale do not alter the property. To prove that $\log K(x; \xi, \sigma^2, \alpha)$ is a concave function of $x$, it is sufficient to show that the second derivative of $\log K(x; \xi, \sigma^2, \alpha)$ is negative for all $x$, following~\cite{azzalini2013skew}:
    \begin{equation}\label{Eq_6}
    \frac{d^2}{dx^2} \log K(x; \xi, \sigma^2, \alpha) = -1 + \frac{-\alpha^2 \phi \left(\alpha x \right)}{\Phi \left(\alpha x \right)} \left[ \frac{\phi\left(\alpha x \right)}{\Phi \left(\alpha x \right)} + \alpha x\right].        
    \end{equation}
To show that the R.H.S. of Eqn. (\ref{Eq_6}) is negative, it is sufficient to show that $B\left( \alpha x\right) = \left\{ \frac{\phi\left(\alpha x \right)}{\Phi \left(\alpha x \right)} + \alpha x\right\}$ is positive for all $\alpha x$ since $\phi(\alpha x)$ and $\Phi(\alpha x)$ are positive for all $x$.

\vspace{0.1cm}
\noindent \underline{Case I:} If $\alpha x \geq 0$, then $B(\alpha x)$ is clearly positive.

\vspace{0.1cm}
\noindent \underline{Case II:} If $\alpha x < 0$, let $v = -\alpha x$. 

\noindent Then $\phi \left(\alpha x\right) = \phi \left(-\alpha x\right) = \phi \left(v\right)$ and $\Phi \left(\alpha x\right) = 1 - \Phi \left(-\alpha x\right) = 1 - \Phi \left(v\right)$. Therefore, we get
$$ B\left( \alpha x \right) = \frac{\phi(v)}{1 - \Phi(v)} - v = r(v) - v,$$
where $r(v)$ is the failure rate of a standard normal random variable. Since it is known (see~\cite{azzalini1985class}) that $r(v) > v$ for all $v$, the assertion is proved.
\end{proof}
\noindent Hence, it is evident that the mode is unique. An interesting observation for the $K(x; \xi, \sigma^2, \alpha)$ is that the unimodality holds (univariate case), which coincides with the log-concavity of the distribution. We denote by $\xi + \sigma m_0 (\alpha)$ the mode of the skew-normal distribution. For any general $\alpha$, no explicit expression of $m_0(\alpha)$ is available and can be done using any iterative method such as Newton-Raphson numerical optimization method. However, the authors in~\cite{azzalini2013skew} obtained a simple and practically accurate approximation as given below:
$$m_0(\alpha) \approx \mu_z - \frac{\gamma_1 \sigma_z}{2} - \frac{\operatorname{sgn}(\alpha)}{2} \operatorname{exp}\left( - \frac{2 \pi}{|\alpha|}\right),$$
where $\mu_z = \sqrt{\frac{2}{\pi}} \delta$ and $\sigma_z = \sqrt{1 - \mu_z^2}$. In addition, the authors in~\cite{azzalini2013skew} obtained via numerical simulation that the mode occurs at $\alpha \approx 1.548$, $\delta = 0.8399$, where its value is 0.5427. Therefore, the maximum value of the $K(x; \xi, \sigma^2, \alpha)$ is finite.\\

\noindent (A2) and (A4) are trivially satisfied as $K(x; \xi, \sigma^2, \alpha)$ is a PDF of the skew-normal distribution as defined in Eqn. (\ref{Eq_10}).\\

\noindent (A3) To show the limiting condition, we need the existence of the expectation.
$$
    \mathbb{E}_{K(x; \xi, \sigma^2, \alpha)}[X] = \underset{u \rightarrow \infty}{\lim} \int_{- \infty}^u x K(x; \xi, \sigma^2, \alpha) dx = \int_{- \infty}^{\infty} x K(x; \xi, \sigma^2, \alpha) dx < \infty. 
$$
Since ~\cite{azzalini1985class} derived the expression for the mean of skew-normal density function as follows: If $Y \sim SN\left(\xi, \sigma^2, \alpha \right)$, then $\mathbb{E}[Y] = \xi + \sigma \sqrt{\frac{2}{\pi}} \delta,$ where $\delta = \frac{\alpha}{\sqrt{1 + \alpha^2}}$ for the univariate case and it is well defined. Now, for $u \geq 0$ 
$$
    \int_u^{\infty} x K(x; \xi, \sigma^2, \alpha) dx \geq u \int_u^{\infty} K(x; \xi, \sigma^2, \alpha) dx = u\left[1 - K_v(u; \xi, \sigma^2, \alpha)\right],
$$
where $K_v(u; \xi, \sigma^2, \alpha)$ is the CDF of skew-normal kernel used in SkewPNN. Therefore, it follows that
$$
    \underset{u \rightarrow \infty}{\lim} \left[ \mathbb{E}_K \left[X\right] - \int_{-\infty}^{\infty} x K(x; \xi, \sigma^2, \alpha) dx \right] = \underset{u \rightarrow \infty}{\lim} \int_{u}^{\infty}x K(x; \xi, \sigma^2, \alpha) dx = 0
$$
as in the limit, the term $\int_{-\infty}^u x K(x; \xi, \sigma^2, \alpha)dx$ approaches to the expectation. By the inequality and the non-negativity of the integrand, we have the main result. Parzen \cite{parzen1962estimation} proved that the estimate $f_n(x)$ is consistent in quadratic mean if conditions (A1)-(A4) are met, which satisfies in this case.
\end{proof}

\begin{remark}
    {\color{black}Similar to PNN, SkewPNN also approaches to Bayes optimal classification as the training set size increases.} 
\end{remark}

\section{Experimental Analysis}\label{sec4}
{\color{black} In this section, we examined the effectiveness of our proposals on both synthetic and real-world datasets. First, we validate our proposed method on toy datasets to show the impact of the proposed mechanisms on finding smooth decision boundaries. Furthermore, numerical experiments were also conducted on real-world datasets collected from various applied fields.

\subsection{Experimental Setting}
In real-world data experiments, the proposed algorithmic-level solutions, namely SkewPNN and BA-SkewPNN, were compared with 16 competitive classification methods, of which 3 are data-level and 13 are algorithm-level approaches. A detailed description of these competing methods is presented in Table \ref{table2}. For these existing methods, we followed the standard implementations with the default parameter settings as described in the references mentioned in Table \ref{table2}. The performance of the traditional classifiers is usually measured based on classification accuracy. Although it is beneficial in balanced classification, it is inappropriate for imbalanced classification scenarios due to its preference for majority classes \cite{tholke2023class, aguilar2024classification, das2022supervised}. A recent study \cite{mcdermott2024closer} showed that the area under the precision-recall curve (AUPRC) is not an appropriate performance measure in cases of class imbalance. Theoretical and empirical findings of \cite{mcdermott2024closer} showed that AUPRC could be a harmful metric since it favors model improvements in sub-populations with more frequent majority class labels, heightening algorithmic disparities. Based on these insights, we carefully selected two evaluation indices for evaluating the experimental results, following \cite{chakraborty2020hellinger, chakraborty2020superensemble, ding2023rvgan}. Hence, we adopted two popularly used performance metrics for comparing state-of-the-art methods with our proposed models that can balance the performance between the majority and minority classes: F1 score (F1) and area under the receiver operating characteristic curve (AUC-ROC), following \cite{chakraborty2020hellinger, chakraborty2020superensemble, ding2023rvgan, akash2019inter, wei2023minority}. F1 is one of the most popular criteria for evaluating the classification accuracy of imbalanced data, which is the harmonic average of precision and recall \cite{sajjadi2018assessing}. F1 ranges between 0 and 1. The higher the F1, the better the classification effect for imbalanced data scenarios.} This metric is particularly valuable when achieving a trade-off between accurate identification and minimizing false positives is essential. {\color{black} Another index used to measure the classification effect of imbalanced data is the area under the curve (AUC) of the receiver operating characteristic (ROC).} The AUC-ROC serves as a critical metric for assessing the effectiveness of classifiers in distinguishing between positive (majority) and negative (minority) classes~\cite{hanley1982meaning}. It visually illustrates the classifier's performance by plotting $sensitivity$ against $1 - specificity$ at various thresholds, offering a consistent measure of the classifier's ability to prioritize positive instances over negatives, regardless of the threshold used. Here, sensitivity, also referred to as recall, quantifies the proportion of actual positives correctly predicted by the classifier. In contrast, specificity measures the proportion of actual negatives correctly identified by the classifier. Precision denotes the ratio of true positive predictions to all predicted positives, while sensitivity measures the proportion of true positive predictions among actual positives. Hence, the higher the value of AUC-ROC, the better the overall classification effect.


\subsection{Experiments on Simulated datasets}
{\color{black}This section provides a comprehensive analysis of the classification performance of the proposed SkewPNN model and two baseline frameworks using synthetic datasets generated with the \texttt{scikit-learn} and \texttt{imbalanced-learn} libraries in Python \cite{lemaitre2017imbalanced}.} This analysis aims to demonstrate the distinct decision boundaries produced by these classifiers. {\color{black}We use three synthetic datasets to evaluate the performance of classification algorithms under varying levels of class imbalance. The half-moon dataset is generated using the make$\_$moons function from the \texttt{scikit-learn} library, representing two interleaving half-circles. The concentric circles dataset is created using the make$\_$circles function, comprising observations falling in concentric circles.} Finally, an intertwined spiral dataset is simulated, representing two-dimensional points arising from a complex interaction of two spirals. {\color{black}In total, 714 observations are generated across these datasets, with Gaussian noise (standard deviation = 0.35) added to increase variability. To assess the impact of varying class imbalance, three setups are used where the imbalanced ratio (IR), the proportion of majority over minority samples, are set to 4.0, 9.0, and 19.0.} In all experiments, 80\% of the data samples are allocated for training, while the remaining 20\% are reserved for testing the classifiers' performance. For evaluation, we compare the decision boundaries of the proposed SkewPNN method against those of the HDDT and PNN classifiers. The hyperparameter values are set as $\sigma = 0.2$ and $\alpha = -2$ for the SkewPNN model, for the HDDT the maximum depth of the tree is 5, and for PNN $\sigma = 0.1$. {\color{black}Figs. \ref{fig:half_moon}, \ref{fig:full_moon}, and \ref{fig:interwined} showcase the decision boundaries of the classification algorithms for the half-moon, concentric circles, and intertwined spiral datasets, respectively. As depicted in the plots, the SkewPNN model can correctly classify most of the observations and achieves the highest AUC-ROC (reported in the lower-right corner of each subplot) compared to the baseline approaches. These results affirm that the SkewPNN framework effectively handles complex data structures with nonlinear manifolds and accurately classifies both majority and minority samples across datasets with varying class imbalances. Moreover, the plots show that the HDDT-generated decision boundaries are axis-parallel lines (non-smooth), whereas SkewPNN-generated decision boundaries are very smooth.} This outcome demonstrates that the proposed approach is well-suited for addressing highly imbalanced data structures.



\begin{figure*}[!ht]
\centering
\includegraphics[width=1\textwidth]{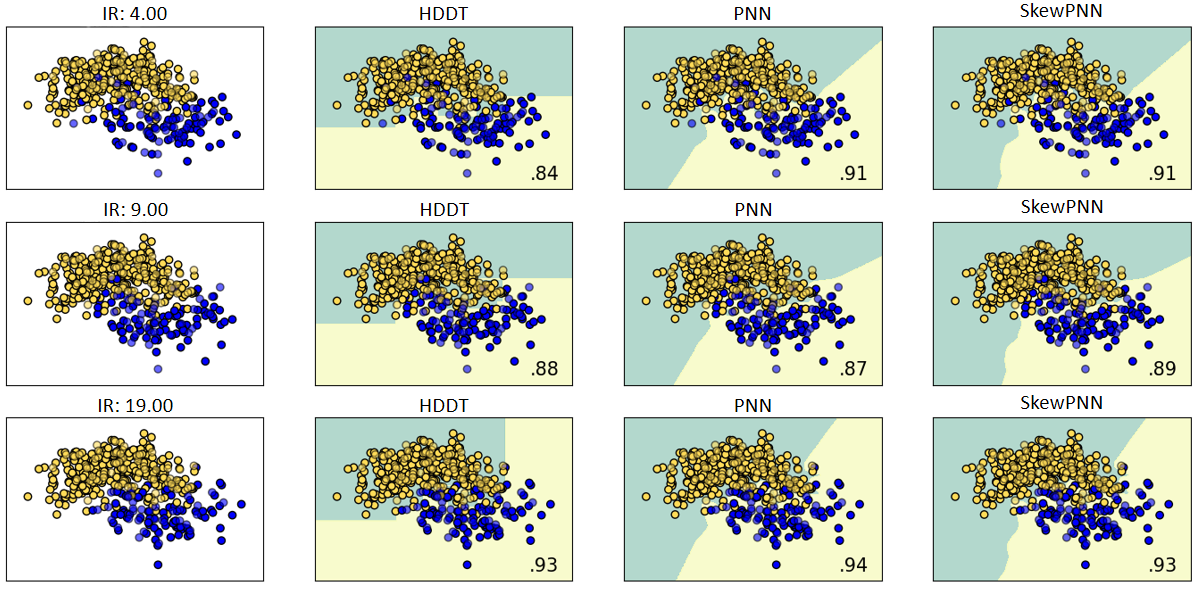}
\caption{{\color{black}Comparison of HDDT, PNN, and SkewPNN classifiers on the synthetically generated half-moon dataset. Training points are displayed in solid colors, while testing points are shown with semi-transparency. The decision boundaries generated by the respective classifiers separating the two classes are depicted by green-shaded regions in subplots. The AUC-ROC score for the test set is presented in the lower-right corner of each plot.}}
\label{fig:half_moon}
\end{figure*}

\begin{figure*}[!ht]
\centering
\includegraphics[width=1\textwidth]{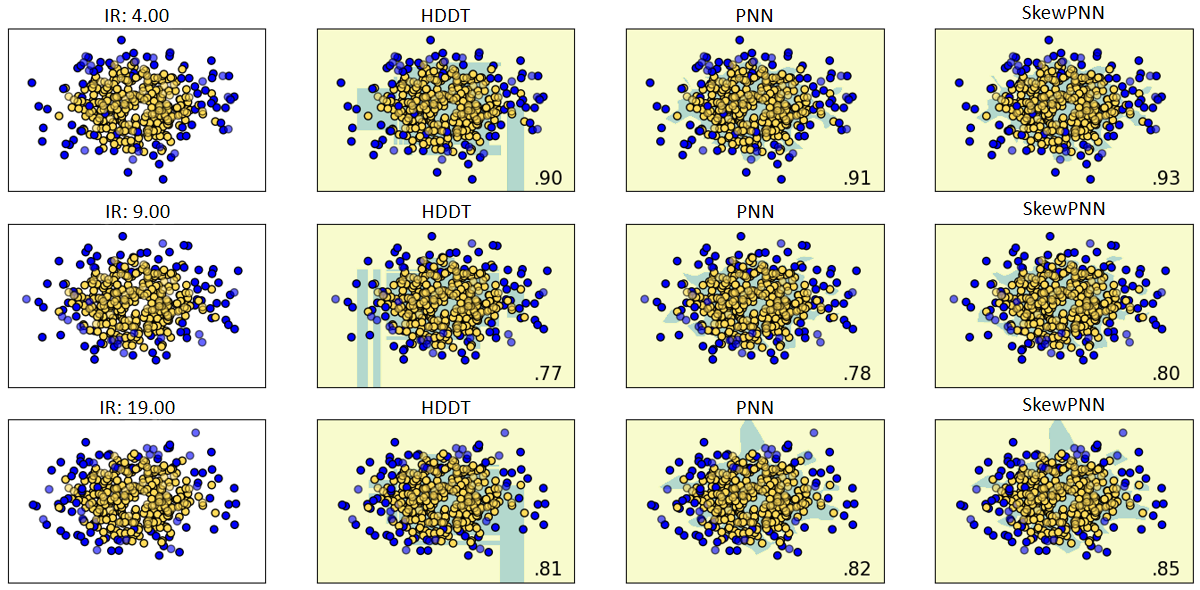}
\caption{{\color{black}Comparison of HDDT, PNN, and SkewPNN classifiers on the synthetically generated concentric circles dataset. Training points are displayed in solid colors, while testing points are shown with semi-transparency. The decision boundaries generated by the respective classifiers separating the two classes are depicted by green-shaded regions in subplots. The AUC-ROC score for the test set is presented in the lower-right corner of each plot.}}
\label{fig:full_moon}
\end{figure*}

\begin{figure*}[!ht]
\centering
\includegraphics[width=1\textwidth]{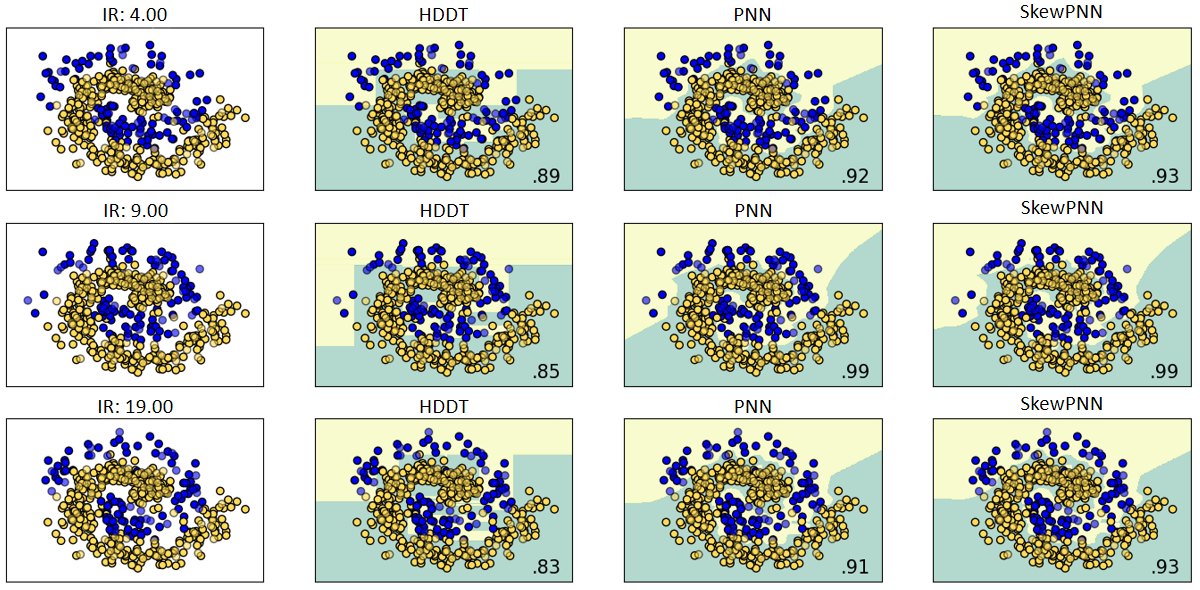}
\caption{{\color{black}Comparison of HDDT, PNN, and SkewPNN classifiers on the synthetically generated intertwined spiral dataset. Training points are displayed in solid colors, while testing points are shown with semi-transparency. The decision boundaries generated by the respective classifiers separating the two classes are depicted by green-shaded regions in subplots. The AUC-ROC score for the test set is presented in the lower-right corner of each plot.}}
\label{fig:interwined}
\end{figure*}


\subsection{Experiments on Real-world datasets}
{\color{black}This section assesses the performance of the proposed algorithms for classifying real-world datasets with balanced and imbalanced class distribution. The following subsections describe the datasets used in our analysis, summarizes the performance of the proposed approaches with state-of-the-art classification methods, and evaluates the statistical significance of the performance improvements.
\subsubsection{Datasets}\label{Sec_dataset}
To show the effectiveness of our proposed methods, we employed 35 datasets chosen from various fields like biology, medicine, business, and finance. Table \ref{table1} groups the datasets into two categories: Imbalanced datasets and Balanced datasets. These datasets are collected from two popular public sources such as the UCI Machine Learning Repository \cite{uciml2017dua} and KEEL Imbalanced Datasets \cite{alcal2011keel}. The imbalanced ratio (IR) between the samples of majority and minority classes is displayed in Table \ref{table1}. By definition, a higher value of IR indicates the dataset is highly imbalanced. Following \cite{ding2023rvgan, akash2019inter, wei2023minority, aler2020study}, we selected 20 imbalanced datasets from KEEL where the IR ranges between 2.49 and 72.69. We also selected the balanced datasets, following \cite{zhu2023classification, akash2019inter} from UCI, where the IR is close to 1. These 35 datasets also vary in terms of the number of features from 3 to 100 and the number of samples from 106 to 5472, following \cite{ding2023rvgan}. Out of the 35 datasets, we consider almost 25\% of the datasets as multiclass data to show the strength of our proposed methods in multiclass scenarios. The number of features ranges between 5 and 90, while the number of samples lies between 132 and 1728 for the multiclass datasets considered in this study.}

\begin{table}[!ht]
\caption{List of Imbalanced datasets and Balanced datasets utilized in experimentation. }
\label{table1}
\centering
\resizebox{0.98\textwidth}{!}{\begin{tabular}{|l|l|c|c|c|l|llcccl}
\hline
Sr. No. & Imbalanced & No. of & No. of & No.of & IR    & \multicolumn{1}{l|}{Sr. No.} & \multicolumn{1}{l|}{Balanced}    & \multicolumn{1}{c|}{No.of} & \multicolumn{1}{c|}{No.of} & \multicolumn{1}{c|}{No.of} & \multicolumn{1}{l|}{IR}   \\ 
 & Datasets & Samples & Attributes & Classes &    & \multicolumn{1}{l|}{ } & \multicolumn{1}{l|}{Datasets}    & \multicolumn{1}{c|}{Samples} & \multicolumn{1}{c|}{Attributes} & \multicolumn{1}{c|}{Classes} & \multicolumn{1}{l|}{ }   \\ \hline
ID1     & ecoli-0-1-4-7 vs 2-3-5-6               & 336       & 7            & 2         & 15.80 & \multicolumn{1}{l|}{BD1}     & \multicolumn{1}{l|}{heart-c}     & \multicolumn{1}{c|}{1025}      & \multicolumn{1}{c|}{13}           & \multicolumn{1}{c|}{2}         & \multicolumn{1}{l|}{1.05} \\ \hline
ID2     & haberman             & 306       & 3            & 2         & 2.77  & \multicolumn{1}{l|}{BD2}     & \multicolumn{1}{l|}{pima}        & \multicolumn{1}{c|}{768}       & \multicolumn{1}{c|}{8}            & \multicolumn{1}{c|}{2}         & \multicolumn{1}{l|}{1.86} \\ \hline
ID3     & vehicle3             & 846       & 18           & 2         & 2.99  & \multicolumn{1}{l|}{BD3}     & \multicolumn{1}{l|}{tic-tac-toe} & \multicolumn{1}{c|}{958}       & \multicolumn{1}{c|}{9}            & \multicolumn{1}{c|}{2}         & \multicolumn{1}{l|}{1.88} \\ \hline
ID4     & yeast-0-3-5-9 vs 7-8 & 506       & 8            & 2         & 9.12  & \multicolumn{1}{l|}{BD4}     & \multicolumn{1}{l|}{australian}  & \multicolumn{1}{c|}{690}       & \multicolumn{1}{c|}{14}           & \multicolumn{1}{c|}{2}         & \multicolumn{1}{l|}{1.24} \\ \hline
ID5     & yeast-2 vs 4         & 514       & 8            & 2         & 9.07  & \multicolumn{1}{l|}{BD5}     & \multicolumn{1}{l|}{bupa}        & \multicolumn{1}{c|}{345}       & \multicolumn{1}{c|}{6}            & \multicolumn{1}{c|}{2}         & \multicolumn{1}{l|}{1.37} \\ \hline
ID6     & page-block                 & 5472      & 10           & 2         & 8.78  & \multicolumn{1}{l|}{BD6}     & \multicolumn{1}{l|}{crx}         & \multicolumn{1}{c|}{690}       & \multicolumn{1}{c|}{15}           & \multicolumn{1}{c|}{2}         & \multicolumn{1}{l|}{1.24} \\ \hline
ID7     & abalone-20 vs 8-9-10 & 1916      & 8            & 2         & 72.69 & \multicolumn{1}{l|}{BD7}     & \multicolumn{1}{l|}{ion}         & \multicolumn{1}{c|}{351}       & \multicolumn{1}{c|}{33}           & \multicolumn{1}{c|}{2}         & \multicolumn{1}{l|}{1.78} \\ \hline
ID8     & abalone9-18          & 731       & 8            & 2         & 16.40 & \multicolumn{1}{l|}{BD8}     & \multicolumn{1}{l|}{hill valley} & \multicolumn{1}{c|}{606}       & \multicolumn{1}{c|}{100}          & \multicolumn{1}{c|}{2}         & \multicolumn{1}{l|}{1.02} \\ \hline
ID9     & yeast4               & 1484      & 8            & 2         & 28.09 & \multicolumn{1}{l|}{BD9}     & \multicolumn{1}{l|}{monks-2}     & \multicolumn{1}{c|}{432}       & \multicolumn{1}{c|}{6}            & \multicolumn{1}{c|}{2}         & \multicolumn{1}{l|}{1.11} \\ \hline
ID10    & Ecoli-0-2-3-4 vs 5   & 202       & 7            & 2         & 9.10   & \multicolumn{1}{l|}{BD10}    & \multicolumn{1}{l|}{spect-f}     & \multicolumn{1}{c|}{80}        & \multicolumn{1}{c|}{44}           & \multicolumn{1}{c|}{2}         & \multicolumn{1}{l|}{1.00}    \\ \hline
ID11    & glass-0-1-4-6 vs 2   & 205       & 9            & 2         & 11.05 & \multicolumn{1}{l|}{BD11}    & \multicolumn{1}{l|}{wisconsin}   & \multicolumn{1}{c|}{683}       & \multicolumn{1}{c|}{9}            & \multicolumn{1}{c|}{2}         & \multicolumn{1}{l|}{1.85} \\ \hline
ID12    & appendicitis         & 106       & 7            & 2         & 4.07  & \multicolumn{1}{l|}{{\color{black}BD12}}    & \multicolumn{1}{l|}{{\color{black}tae}}   & \multicolumn{1}{c|}{{\color{black}151}}       & \multicolumn{1}{c|}{{\color{black}5}}            & \multicolumn{1}{c|}{{\color{black}3}}         & \multicolumn{1}{l|}{{\color{black}1.06}} \\ \hline
ID13    & car-vgood            & 1728      & 6            & 2         & 25.58 & \multicolumn{1}{l|}{{\color{black}BD13}}    & \multicolumn{1}{l|}{{\color{black}wine}}   & \multicolumn{1}{c|}{{\color{black}178}}       & \multicolumn{1}{c|}{{\color{black}13}}            & \multicolumn{1}{c|}{{\color{black}3}}         & \multicolumn{1}{l|}{{\color{black}1.47}} \\ \hline
ID14    & transfusion     & 748       & 4           &   2  & 3.20  & \multicolumn{1}{l|}{{\color{black}BD14}}    & \multicolumn{1}{l|}{{\color{black}hayes-roth}}   & \multicolumn{1}{c|}{{\color{black}132}}       & \multicolumn{1}{c|}{{\color{black}5}}            & \multicolumn{1}{c|}{{\color{black}3}}         & \multicolumn{1}{l|}{{\color{black}1.70}} \\ \hline
ID15    & lipid-indian-liver             &    583   & 10        &   2       &  2.49 & \multicolumn{1}{l|}{{\color{black}BD15}}    & \multicolumn{1}{l|}{{\color{black}movement\_libras}}   & \multicolumn{1}{c|}{{\color{black}360}}       & \multicolumn{1}{c|}{{\color{black}90}}            & \multicolumn{1}{c|}{{\color{black}15}}         & \multicolumn{1}{l|}{{\color{black}1.00}} \\ \hline
ID16    & car-good             & 1728      & 6            & {\color{black}4}         & 18.61 &         &  & \multicolumn{1}{l}{}           & \multicolumn{1}{l}{}              & \multicolumn{1}{l}{}           &                           \\ \cline{1-6}
ID17    & dermatology-6 & 358       & 34            &   {\color{black}6}       & 5.55  &                              &                                  & \multicolumn{1}{l}{}           & \multicolumn{1}{l}{}              & \multicolumn{1}{l}{}           &                           \\ \cline{1-6}
ID18    & flare-F   &   1066     & 11            &   {\color{black}6}       & 7.69  &                              &                                  & \multicolumn{1}{l}{}           & \multicolumn{1}{l}{}              & \multicolumn{1}{l}{}           &                           \\ \cline{1-6}

{\color{black}ID19}    & {\color{black}new-thyroid}   & {\color{black}215}       & {\color{black}5}            & {\color{black}3}         & {\color{black}5.00}  &                              &                                  & \multicolumn{1}{l}{}           & \multicolumn{1}{l}{}              & \multicolumn{1}{l}{}           &                           \\ \cline{1-6}
{\color{black}ID20}    & {\color{black}DataUserMod}   & {\color{black}258}       & {\color{black}5}            & {\color{black}4}         & {\color{black}3.67}  &                              &                                  & \multicolumn{1}{l}{}           & \multicolumn{1}{l}{}              & \multicolumn{1}{l}{}           &                           \\ \cline{1-6}
\end{tabular}}
\end{table}

\subsubsection{Analysis of Real-world Datasets }\label{{Sec_real_world_result}}
To assess the efficacy of the proposed SkewPNN and BA-SkewPNN, we implemented them on datasets characterized by balanced and imbalanced data distributions, as detailed in Table~\ref{table1}. Through a series of meticulously designed experiments on these standardized datasets, we systematically compared the performance of our proposed methods against a comprehensive array of techniques for class-imbalanced (also balanced) learning. Our methodology commenced with the random shuffling of observations within each dataset, followed by the application of z-score normalization to ensure uniformity. Subsequently, we conducted 10-fold cross-validation, employing distinct and randomly generated training and test sets for each iteration. Moreover, our assessment encompassed a comprehensive comparative analysis between our proposed classifiers and a range of benchmark methods. This empirical evaluation allowed us to assess an in-depth analysis of performance using key metrics such as AUC-ROC and F1-score.


\begin{table}[]
    \centering
    \tiny
\caption{List of classifiers utilized in experimental evaluation.}
\resizebox{1.0\textwidth}{!}{\begin{tabular}{|c|c|c|c|c|}
\hline Approaches & Methods & Abbreviations & Key Aspects & Publication Year \\
\hline
\multirow{6}{*}{Data-Level} 
& \multirow{2}{*}{Synthetic Minority Oversampling TEchnique + CART~\cite{chawla2002smote}}  & \multirow{2}{*}{SC} & \multirow{2}{*}{Oversampling Method} & 
\multirow{2}{*}{2002} \\ & & & & \\
\cline{2-5}
& {\color{black}Synthetic Minority Oversampling TEchnique  \cite{sauglam2022novel}} & \multirow{2}{*}{{\color{black}SWBC}} & 
\multirow{2}{*}{{\color{black}Oversampling with Boosting Approach}} & \multirow{2}{*}{{\color{black}2022}} \\ & {\color{black}with boosting + CART} & & & \\
\cline{2-5}
& \multirow{2}{*}{{\color{black}Generalized tHreshOld ShifTing + RF \cite{esposito2021ghost}}} & \multirow{2}{*}{{\color{black}GHOST}} & \multirow{2}{*}{{\color{black}Thresholding Method}} & \multirow{2}{*}{{\color{black}2021}} \\ & & & & \\
\hline \multirow{17}{*}{Algorithm-Level} & Decision Tree~\cite{breiman2017classification} & DT &  
\multirow{5}{*}{Tree-based Classifiers} & 1984 \\
 & Neural Network~\cite{ripley1994neural} & NN & & 1994 \\ 
 & Surface-to-Volume Ratio Tree~\cite{zhu2023classification}& SVRT & & {\color{black}2023} \\ 
 & Hellinger Distance Decision Tree~\cite{cieslak2012hellinger} & HDDT & & 2012 \\ 
 & {\color{black}Inter-node HDDT~\cite{akash2019inter}} & {\color{black}iHDDT} & & {\color{black}2019} \\ 
 \cline{2-5} 
 & Random Forest~\cite{breiman2001random} & RF & \multirow{6}{*}{Ensemble Learning Methods}  &  2001 \\
 & Ada Boost~\cite{freund1995desicion} & AB & & 1995 \\ 
 & Extreme Gradient Boosting~\cite{chen2015xgboost} & XGB & & 2015 \\
 & Imbalanced XG Boost~\cite{wang2020imbalance} & IXGB& & {\color{black}2020} \\
 & Hellinger Distance Random Forest~\cite{su2015improving} & HDRF & & 2015 \\
 \cline{2-5}
& Hellinger Net~\cite{chakraborty2020hellinger} & HNet & \multirow{5}{*}{Neural Network Classifiers}  &  {\color{black}2020} \\
& Classical PNN~\cite{specht1990probabilistic} & PNN & &  1990 \\
& Bat Algorithm based PNN~\cite{yang2019ba} & BA-PNN & & {\color{black}2019} \\
& \textit{Skew-Normal PNN} & SkewPNN & & \textit{Proposed} \\ 
& \textit{Bat Algorithm based Skew-Normal PNN} & BA-SkewPNN & & \textit{Proposed} \\ \hline
\end{tabular}}
\label{table2}
\end{table}
To implement the conventional PNN where the Gaussian kernel is employed, the smoothing parameter ($\sigma$) was systematically varied across the range of $0.01$ or $0.1$ till approximate constant (either $1$ or $10$), with increments of $0.05$ or $0.1$ for all samples. Subsequently, we explored an alternative approach. In this variant, we integrated the Bat algorithm to dynamically assign optimal and distinct smoothing parameters for each sample during the PDF calculations. This model, denoted as BA-PNN, operated with two distinct smoothing parameter intervals: $[0.01, 1.0)$ and $[0.1, 1.0)$. Moving forward, we conducted experiments utilizing our proposed SkewPNN and BA-SkewPNN. In the SkewPNN, we leveraged the skew-normal kernel to estimate the values within pattern layers. We choose the skewness parameter for SkewPNN from a pre-specified range of [-6, 6]. Similarly, the smoothing parameter is chosen across samples, ranging from $0.01$ or $0.1$ to approximate constant (either $1$ or $10$), with intervals of $0.05$ or $0.1$. In the case of the BA-SkewPNN model, the BA algorithm was employed to assign the optimal and diverse smoothing parameters for the PDF calculations across all the samples. Likewise, two distinct intervals were maintained for selecting smoothing parameters: $[0.01, 1.0)$ or $[0.1, 1.0)$. Notably, the fitness function remained consistent for both BA-PNN and BA-SkewPNN.
\begin{table*}[!ht]
\caption{Experimental results encompassing AUC-ROC and F1-score (mean and standard deviation in bracket) analyses conducted on imbalanced datasets listed in Table~\ref{table1}. The highest value of the metrics for a particular dataset is highlighted in {\underline{\textbf{bold}}}.}
\label{table3}
\resizebox{1.0\textwidth}{!}{
}
\end{table*}

{\color{black}Tables~\ref{table3} and ~\ref{table4} summarize the performance of the proposed methods and the state-of-art frameworks for classifying the imbalanced and balanced datasets based on the AUC-ROC and F1 metrics, respectively. These tables represent the mean (standard deviation) values of the accuracy measures obtained through 10-fold cross-validation. For the imbalanced datasets, the proposed BA-SkewPNN model achieves the highest AUC-ROC values in 14 out of 20 cases and the best F1 scores in 9 datasets. Similarly, the SkewPNN framework demonstrates excellent performance over state-of-the-art classifiers across several real-world datasets, as evident from its AUC-ROC and F1 metrics. Among the competing models, HNet achieves the highest classification accuracy for the page-block and Ecoli-0-2-3-4 vs 5 datasets. For the imbalanced multiclass datasets, the proposed approaches either outperform or perform comparably to the baseline frameworks. Specifically, for the dermatology-6 and flare-F datasets, the SkewPNN, BA-SkewPNN, PNN, and BA-PNN models provide the most accurate results. The empirical evaluations on the real-world balanced datasets further underscore the superiority of the proposed BA-SkewPNN framework. In terms of the AUC-ROC metric, BA-SkewPNN achieves the highest accuracy in 6 out of 15 datasets, followed by SkewPNN, GHOST, and XGB models. Similarly, the F1 metrics reveal a consistent trend, with BA-SkewPNN excelling in 6 out of 20 datasets, followed by SkewPNN, PNN, and XGB. For the balanced multiclass datasets, the BA-SkewPNN model outperforms the state-of-the-art classifiers in 3 out of 4 cases. As outlined in Remark \ref{rem3}, here, we also experimentally observed that SkewPNN and BA-SkewPNN could not showcase the best possible performance as compared to state-of-the-art methods in Table \ref{table3} and \ref{table4} when the curse of many dimensions occurred. As the number of input features increases, the estimator of PDF becomes difficult in complex and high-dimensional spaces. Also, to some extent, large datasets contribute to the failure cases of our proposed methods in Table \ref{table3} and \ref{table4}.} Overall, the experimental results highlight the commendable performance of the proposed SkewPNN and BA-SkewPNN models for real-world classification tasks. This evaluation also provides valuable insights into the strengths and limitations of different state-of-the-art classifiers. In essence, the proposed methods consistently demonstrate superior performance in terms of accuracy measures and excel both in imbalanced and balanced dataset scenarios.

\subsection{Statistical Tests for Comparison of Classifiers}\label{Sec_stat_signif}
{\color{black}We employ several non-parametric statistical tests to assess the robustness of various classification approaches. Among these, the multiple comparison with the best (MCB) test is used as a post-hoc statistical procedure to determine the `best' classifier and identify significant performance differences between competing approaches and the `best' model \cite{koning2005m3}. The MCB test ranks models based on their accuracies across different classification tasks, identifying the model with the lowest average rank as the `best-performing' approach. It then calculates the critical distance (CD) for all models and compares the CDs of competing approaches against the `best-performing' framework. Fig. \ref{fig:skewpnn_mcb} presents the MCB test results for imbalanced and balanced classification tasks, evaluated using AUC-ROC and F1 metrics. For the imbalanced classification tasks, the proposed BA-SkewPNN approach achieves the lowest average ranks of 2.25 (w.r.t. AUC-ROC) and 4.22 (w.r.t. F1), followed by the BA-PNN and SkewPNN models. Similarly, for balanced classification tasks, the BA-SkewPNN approach records the lowest ranks of 4.37 (w.r.t. AUC-ROC) and 6.13 (w.r.t. F1), followed by the GHOST and XGB frameworks. Thus, the BA-SkewPNN model emerges as the `best-performing' classifier across all scenarios. Its CD, highlighted in the shaded region, serves as the reference value for the test. Since the CDs of most competing models do not overlap with the reference value, we conclude that their performance is significantly worse than the `best-performing' BA-SkewPNN model.  

Alongside the MCB testing procedure, we employ the non-parametric Friedman test to detect statistical differences in model performances \cite{friedman1937use}. This distribution-free approach evaluates the null hypothesis that the classification performance of all the competing approaches is equivalent based on their average ranks. The test rejects the null hypothesis of statistical equivalence if the computed p-value is less than the specified significance level. We conduct the Friedman test for both imbalanced and balanced classification tasks using the AUC-ROC and F1 metrics at a 5\% significance level. For the imbalanced classification tasks, the Friedman test yields p-values of $2.2e^{-16}$ (AUC-ROC) and 0.00001 (F1 metric). For the balanced classification task, the corresponding p-values are 0.0001 (AUC-ROC) and 0.00002 (F1 metric). Since all the calculated p-values are significantly below the threshold of 0.05, we reject the null hypothesis. This indicates that the classification performances of various models differ significantly across both imbalanced and balanced datasets. 

Furthermore, to identify which models differ significantly in classifying the imbalanced and balanced datasets, we conduct a post-hoc analysis using the Wilcoxon signed-rank test \cite{woolson2007wilcoxon}. This non-parametric statistical procedure conducts pairwise comparisons between individual classifiers, testing the null hypothesis that no significant difference exists between their performances. In our analysis, the Wilcoxon signed-rank test is applied to determine which competing models differ significantly from the proposed SkewPNN and BA-SkewPNN framework at a 5\% level of significance. Table \ref{tab:stat_sig} summarizes the p-values obtained by comparing the AUC-ROC and F1 metrics of the SkewPNN and BA-SkewPNN models with the competing classifiers for imbalanced and balanced datasets. For the imbalanced classification tasks, the results show that the BA-SkewPNN framework exhibits significant performance differences from all the competing models across both metrics, except for the GHOST model in terms of the F1 metric. Similarly, the SkewPNN model demonstrates significant performance differences from a bunch of competing methods, as determined by both AUC-ROC and F1 metrics. In balanced classification tasks, the p-values indicate that significant performance differences exist between the proposed models and most competing approaches. Overall, the statistical significance tests support the conclusion that the BA-SkewPNN and SkewPNN models exhibit distinct and superior performance compared to other methods for the majority of datasets.} 

\begin{figure}
    \centering
    \includegraphics[width=0.95\linewidth]{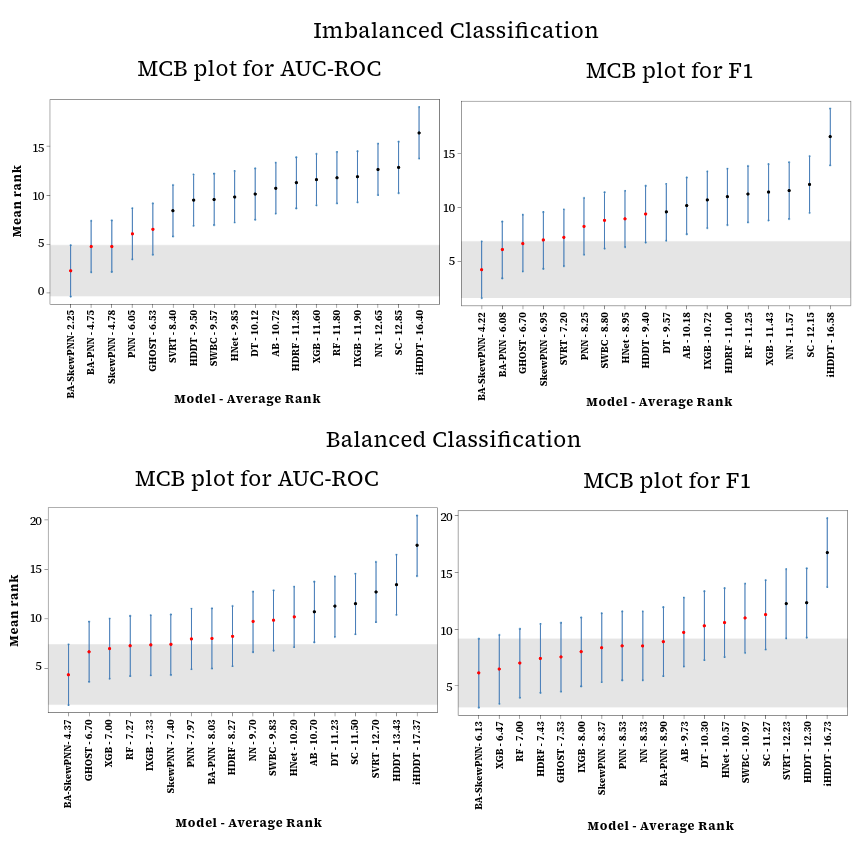}
    \caption{{\color{black}MCB test results for imbalanced and balanced classification tasks were evaluated based on the AUC-ROC and F1 metrics. In the plot, `BA-SkewPNN - 2.25' represents the average rank of the BA-SkewPNN framework (2.25) as computed using the AUC-ROC metric for imbalanced datasets. Similar interpretations apply to other methods.}}
    \label{fig:skewpnn_mcb}
\end{figure}


\begin{table}[!ht]
    \centering
    \caption{{\color{black}Wilcoxon signed-rank test p-values comparing the performance of SkewPNN and BA-SkewPNN with state-of-the-art methods on both imbalanced and balanced datasets. An asterisk (*) indicates significant performance differences between the proposed models and competing methods at the 5\% significance level.}}
    \scriptsize
    \label{tab:stat_sig}
    \begin{tabular}{|c|c|c|c|c|c|c|c|c|}
    \hline
        \multirow{3}{*}{Models}& \multicolumn{4}{c|}{Imbalanced Classification} & \multicolumn{4}{c|}{Balanced Classification}\\ \cline{2-9}
        & \multicolumn{2}{c|}{SkewPNN vs. Models} & \multicolumn{2}{c|}{BA-SkewPNN vs. Models} & \multicolumn{2}{c|}{SkewPNN vs. Models} & \multicolumn{2}{c|}{BA-SkewPNN vs. Models} \\ \cline{2-9}
        & AUC-ROC & F1 & AUC-ROC & F1 & AUC-ROC & F1 & AUC-ROC & F1 \\  \hline
        DT & 0.0001* & 0.0947 & 0.0001* & 0.0060* & 0.1817 & 0.6192 & 0.0151 & 0.1384 \\ 
        NN & 0.0002* & 0.0141* & 0.0001* & 0.0005* & 0.2444 & 0.2271 & 0.0252* & 0.0442* \\ 
        AB & 0.0001* & 0.0606 & 0.0000* & 0.0008* & 0.1147 & 0.3808 & 0.0002* & 0.0660 \\ 
        RF & 0.0003* & 0.0191* & 0.0001* & 0.0026* & 0.5980 & 0.7378 & 0.0483* & 0.3193 \\ 
        SC & 0.0001* & 0.0120* & 0.0000* & 0.0004* & 0.1262 & 0.3350 & 0.0021* & 0.0416* \\ 
        SVRT & 0.0004* & 0.2729 & 0.0000* & 0.0220* & 0.0677 & 0.1514 & 0.0001* & 0.0027* \\
        HDDT & 0.0002* & 0.0825 & 0.0001* & 0.0148* & 0.0416* & 0.2622 & 0.0013* & 0.0151* \\
        HDRF & 0.0003* & 0.0120* & 0.0000* & 0.0021* & 0.2271 & 0.3882 & 0.0277* & 0.1147 \\ 
        XGB & 0.0000* & 0.0068* & 0.0000* & 0.0001* & 0.5788 & 0.7349 & 0.1514 & 0.3560 \\ 
        IXGB & 0.0004* & 0.0448* & 0.0001* & 0.0024* & 0.5980 & 0.6401 & 0.0527 & 0.2216 \\ 
        HNet & 0.0023* & 0.2283 & 0.0002* & 0.0173* & 0.1600 & 0.4020 & 0.0042* & 0.1006 \\ 
        iHDDT & 0.0000* & 0.0000* & 0.0000* & 0.0000* & 0.0000* & 0.0001* & 0.0004* & 0.0002* \\ 
        GHOST & 0.0220* & 0.7392 & 0.0001* & 0.0836 & 0.4750 & 0.7523 & 0.2807 & 0.5227 \\ 
        SWBC & 0.0002* & 0.1841 & 0.0000* & 0.0220* & 0.2622 & 0.3808 & 0.0075* & 0.0319* \\ 
        PNN & 0.0021* & 0.0537 & 0.0003* & 0.0003* & 0.2778 & 0.2376 & 0.0298* & 0.0513 \\ 
        BA-PNN & 0.8912 & 0.9723 & 0.0001* & 0.0173* & 0.4009 & 0.5000 & 0.0008* & 0.0101* \\ 
        SkewPNN & ~ & ~ & 0.0013* & 0.0010* & ~ & ~ & 0.0394* & 0.0450* \\ 
        BA-SkewPNN & 0.9989 & 0.9992 & ~ & ~ & 0.9657 & 0.9606 & ~ & ~ \\ \hline
    \end{tabular}
\end{table}

\section{Conclusion and Discussion}\label{sec5}
Imbalanced datasets pose complexities in machine learning problems, particularly in classification endeavors. This disparity can prompt models to favor the over-represented class, resulting in compromised performance of the minority class. This discrepancy becomes particularly crucial when the minority class holds pivotal insights or embodies infrequent yet pivotal occurrences. In classification scenarios, when class distribution is notably skewed, with one class holding notably more samples (majority class) than others (minority class), the dataset is labeled as imbalanced. This study introduced SkewPNN and BA-SkewPNN algorithms leveraging PNN in tandem with the skew-normal kernel to address the challenge associated with imbalanced datasets. Through this integration, we harness the PNN's unique capability to provide probabilistic outputs, allowing for a nuanced understanding of prediction confidence and adept handling of uncertainty. Incorporating the skew-normal kernel, renowned for its adaptability in handling non-symmetric data, markedly enhances the representation of underlying class densities. To optimize performance on imbalanced datasets, fine-tuning hyperparameters is crucial, a task we undertake using the Bat optimization algorithm. Our study demonstrates the statistical consistency of density estimates, indicating a smooth convergence to the true distribution with increasing sample size. {\color{black}We have also outlined the theoretical analysis of computational complexity for our proposed algorithms.} 
Furthermore, extensive simulations and real-world data examples comparing various machine learning models affirm the SkewPNN and BA-SkewPNN effectiveness on both balanced and imbalanced data. 

{\color{black}SkewPNN can handle data types that are commonly used in classification tasks. Continuous features are inherently well-suited to the skew-normal kernel due to its flexibility in capturing skewness. For categorical features, one-hot encoding is used and SkewPNN is compatible with these transformations. SkewPNN's ability to capture asymmetrical feature distributions makes it robust for imbalanced data, confirmed by our experimental results and its generalizability across diverse domains. The key limitation of the proposed framework is the scalability issue for large and high-dimensional datasets due to its reliance on storing training instances. This is why our proposal may require more memory for big data problems, and performance may degrade as observed in experimental settings. It may be handled by incorporating a nearest neighbor-based approach, which will determine how many data points from the training set will be considered when calculating distances in the pattern layer of SkewPNN and BA-SkewPNN. A trade-off between the number of nearest neighbors, computational costs, and accuracy will result in a more robust algorithm for big data problems -- this can be considered as an immediate future research avenue of this paper.} Looking ahead, other future research could also focus on working on imbalanced regression problems~\cite{yang2021delving} where hard boundaries between classes do not exist. We can extend our current work to accommodate regression tasks that involve continuous and even infinite target values (e.g., the age of different people based on their visual appearances in computer vision, where age is a continuous target and can be highly imbalanced).

\section*{CRediT authorship contribution statement}
{\bf Shraddha M. Naik}: Data curation; Formal analysis; Investigation; Visualization; Software; Writing - original draft. 
{\bf Tanujit Chakraborty}: Conceptualization; Methodology; Formal analysis; Investigation; Validation; Writing - original draft; Writing - review \& editing.
{\bf Madhurima Panja}: Formal analysis; Investigation; Visualization; Writing - review \& editing.
{\bf Abdenour Hadid}: Supervision; Project administration; Writing - review \& editing.
{\bf Bibhas Chakraborty}: Supervision; Project administration; Writing - review \& editing.

\section*{Declaration of competing interest}
The authors declare that they have no known competing financial interests or personal relationships that could have appeared to influence the work reported in this paper.

\section*{Acknowledgments}
We thank the editor, the associate editor, and three reviewers for their insightful comments and constructive feedback.

{\color{black}
\section{Appendix}\label{appendix}
\noindent {\bf Illustrative Example with Numerical values:}\\
\noindent We provide an illustrative example with numerical values demonstrating how the SkewPNN algorithm works for imbalanced datasets. For simplicity and easy computation, we consider the training sample of size 10, and Table \ref{tab:numerical_example} displays this example dataset with IR = 4. 
\begin{itemize}
    \item Class 0 (Majority Class): Clustered around [2,2].
    \item Class 1 (Minority Class): Clustered around [4,4].
    \item Our objective is to classify a test point [3.5,3.5].
\end{itemize}

\begin{table}[!ht]
\caption{{\color{black}A toy data example with numerical values having ten observations (eight class 0 examples and two class 1 examples).}}
    \label{tab:numerical_example}
    \centering
    \begin{tabular}{|c|c|c|c|c|c|c|c|c|c|c|}
\hline Feature 1 & 2.0 & 2.2 & 2.4 & 2.6 & 2.8 & 1.8 & 1.9 & 2.3 & 4.0 & 4.1 \\ \hline
       Feature 2 & 2.0 & 2.2 & 2.4 & 2.6 & 2.8 & 1.8 & 2.1 & 2.1 & 4.0 & 4.1 \\ \hline
       Class     & 0   & 0   & 0   &  0  &  0  & 0   & 0   & 0   & 1   & 1 \\ \hline
\end{tabular}
\end{table}

\noindent Given the input data, the pattern layer computes the similarity between the input vector and the training samples using the Gaussian kernel function (Eqn. \ref{Eq_Gaussian_Kernel}) in PNN
\begin{equation}\label{new_eq_numerical}
    K(x,x_i) = \exp\left(-\frac{d_e^2}{2\sigma^2}\right)
\end{equation}
and skew-normal kernel function in SkewPNN (with an extra adjusting factor or constant)
\begin{equation}\label{new_eqn_numerical}
   K(x,x_i) = 2 \exp\left(-\frac{d_e^2}{2\sigma^2}\right) \Phi \left\{\alpha \left(\frac{d_e}{\sigma}\right)\right\},
\end{equation}
where $d_e$ is the Euclidean distance between the test point ([3.5,3.5]) and the training points, $\Phi$ is the CDF of standard normal distribution, $\sigma$ and $\alpha$ are the parameters controlling the scale and skewness of the kernels, respectively. It is important to note that for $\alpha=0$, SkewPNN is simply PNN with a Gaussian kernel. The summation layer aggregates the outputs of the pattern layer neurons for each class. Finally, the output layer provides the probability of the input vector (test sample here) belonging to each class, and the class with the highest probability is chosen as the predicted class. This can be done in two ways: Normalization by dividing the kernel sum for a class by the number of training samples in each class (used in the Algorithm \ref{algo_skewpnn}) or normalizing by dividing the kernel sum for a class by the total sum across all classes. When classes are equal in size, the output of the two normalization methods matches. The normalization by total kernel values produces true posterior probabilities and ensures probabilities sum to 1; therefore, we adopt it for this numerical example for its simplicity ($c$ denotes class label):
$$
P(c \mid x)=\frac{\sum_{i \in c} K\left(x, x_i\right)}{\sum_j \sum_{i \in c_j} K\left(x, x_i\right)}.
$$
We work with this numerical data to provide a detailed calculation for the predicted class using PNN and SkewPNN. The computational results are depicted in Table \ref{tab:results_numerical}, and we plotted the decision boundaries generated by both the models in Fig. \ref{plot_decision_boundary_numerical}. 

\begin{table}[!ht]
\caption{{\color{black}Values of different layers of PNN and SkewPNN calculated for the toy example. The networks assign the test input to the class with the highest estimated probability (highlighted in bold).}}
    \label{tab:results_numerical}
    \centering
\resizebox{1.0\textwidth}{!}{\begin{tabular}{|l|l|l|l|l|}
\hline \multirow{2}{*}{${x_i}$} & \multirow{2}{*}{$d=||{x_i}-{x}||$} & Gaussian Kernel & Skew-Normal Kernel with & \multirow{2}{*}{Class Labels} \\ 
 &  & with $\sigma =1$ (Eqn. \ref{new_eq_numerical}) & $\sigma =1, \; \alpha=-2$ (Eqn. \ref{new_eqn_numerical}) & \\
\hline [2.0, 2.0] & 2.1213 & 0.1053 & $2.3283E-06$ & 0 \\ \hline
       [2.2, 2.2] & 1.8384 & 0.1845 & $4.3552E-05$ & 0 \\ \hline
       [2.4, 2.4] & 1.5556 & 0.2981 & 0.0005       & 0 \\ \hline
       [2.6, 2.6] & 1.2727 & 0.4448 & 0.0048       & 0 \\ \hline
       [2.8, 2.8] & 0.9899 & 0.6126 & 0.0292       & 0 \\ \hline 
       [1.8, 1.8] & 2.4041 & 0.0555 & $8.4586E-08$ & 0 \\ \hline
       [1.9, 2.1] & 2.1260 & 0.1043 & $2.2102E-06$ & 0 \\ \hline
       [2.3, 2.1] & 1.8439 & 0.1826 & $4.1320E-05$ & 0 \\ \hline
       [4.0, 4.0] & 0.7071 & 0.7788 & 0.1225       & 1 \\ \hline
       [4.1, 4.1] & 0.8485 & 0.6976 & 0.0625       & 1 \\ \hline
       \multirow{3}{*}{Summation} & sum(kernel values & \multirow{2}{*}{1.9882}   & \multirow{2}{*}{0.0347}  & \multirow{3}{*}{Predicted class}\\ 
       & for class 0) & & & \\ \cline{2-4}      
       layer & sum(kernel values & \multirow{2}{*}{1.4765}  &  \multirow{2}{*}{0.1850} &  for test point (below) \\ 
        & for class 1) & & & \\ \hline    
       \multirow{3}{*}{Output} & \multirow{2}{*}{$\operatorname{Prob}\left(\operatorname{class} 0 \; | \; x\right)$} & \multirow{2}{*}{$\frac{1.9882}{(1.9882+1.4765)}$={\bf 0.5738}}   & \multirow{2}{*}{0.1580}  & \multirow{2}{*}{PNN predicts class 0}\\ 
       & & & & \\ \cline{2-5}      
       layer & \multirow{2}{*}{$\operatorname{Prob}\left(\operatorname{class} 1 \; | \; x\right)$} & \multirow{2}{*}{0.4262}   & \multirow{2}{*}{$\frac{0.1850}{(0.0347+0.1850)}$=\bf 0.8420}  & \multirow{2}{*}{SkewPNN predicts class 1}\\ 
       & & & & \\  \hline
\end{tabular}}
\end{table}

\begin{figure*}[htbp]
     \centering
     \begin{subfigure}[b]{0.49\textwidth}
         \centering
         \includegraphics[width=8cm, height=6.5cm]{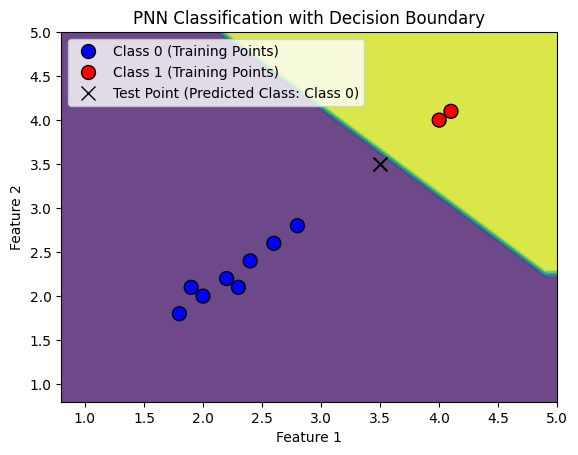}
         \caption{ }
         \label{fig:1}
     \end{subfigure}
     \hfill
     \begin{subfigure}[b]{0.49\textwidth}
         \centering
         \includegraphics[width=8cm, height=6.5cm]{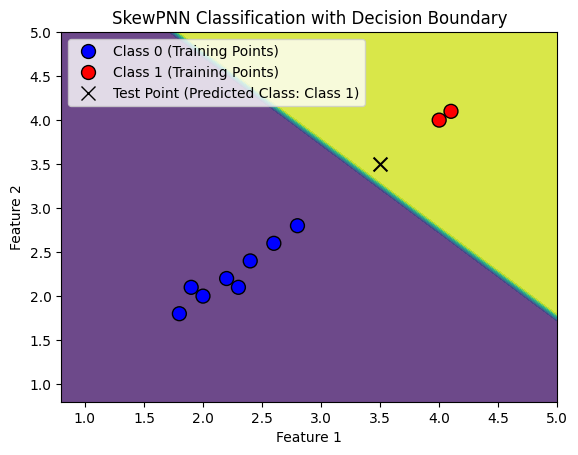}
         \caption{ }
     \end{subfigure}
        \caption{{\color{black}Plots of decision boundaries separating two classes (Class 0 and 1) using (a) PNN algorithm and (b) SkewPNN algorithm. The plot also depicts the predicted class label for the test point.}}
        \label{plot_decision_boundary_numerical}
\end{figure*}

\noindent This illustrative example with numerical values finds that if we use the Gaussian kernel, the majority of class points dominate the probability distribution, leading to the classification of the test point as class 0 (the minority class points have a much smaller contribution). Now, instead of a Gaussian kernel, if we use a skew-normal kernel with the skewness parameter (here we choose $\alpha = -2$), it amplifies the influence of the minority class points, making their probability contribution more significant. This leads to a different classification result, favoring class 1 in this example, as observed in Table \ref{tab:results_numerical} and Fig. \ref{plot_decision_boundary_numerical}. This example demonstrates how the SkewPNN works for imbalanced binary classification datasets.}

\section*{Code and Data Availability}
To support reproducible research, we are making the code and the data publicly accessible at \url{https://github.com/7shraddha/SkewPNN}.

\begin{scriptsize}
\bibliographystyle{elsarticle-harv} 
\bibliography{example}
\end{scriptsize}

\end{document}